\documentclass{article}

\usepackage[preprint]{neurips_2026}
\usepackage{booktabs}
\usepackage{multirow}

\usepackage[utf8]{inputenc} % allow utf-8 input
\usepackage[T1]{fontenc}    % use 8-bit T1 fonts
\usepackage{hyperref}       % hyperlinks
\usepackage{url}            % simple URL typesetting
\usepackage{booktabs}       % professional-quality tables
\usepackage{amsfonts}       % blackboard math symbols
\usepackage{nicefrac}       % compact symbols for 1/2, etc.
\usepackage{microtype}      % microtypography
\usepackage{xcolor}         % colors
\usepackage{graphicx}
\usepackage{subcaption}
\usepackage{algorithm}
\usepackage{algpseudocode}
\usepackage{amsmath}
\usepackage{amsthm}

\newtheorem{theorem}{Theorem}
\title{OS-Pruner: Pruning Chains-of-Thought of
Reasoning Models via Optimal Stopping}

\author{%
  Mohammed Ehab \\
  MIT\\
  \texttt{ehab02@mit.edu} \\
  \And Aymane El Gadarri \\
  MIT\\
  \texttt{aymelgad@mit.edu} \\
  \And Vivek Farias \\
  MIT\\
  \texttt{vivekf@mit.edu} \\ 
  \AND Adam Jozefiak \\
  MIT\\
  \texttt{jozefiak@mit.edu} \\
  \And Ciamac C. Moallemi \\
  Columbia Business School \\
  \texttt{ciamac@gsb.columbia.edu}
}

\begin{document}

\maketitle

\begin{abstract}

  Large Language Models (LLMs) have achieved remarkable success in complex reasoning tasks through Chain-of-Thought (CoT) prompting. However, these models often exhibit "computational overthinking," generating redundant reasoning steps that increase latency and cost without improving accuracy. Recent studies suggest that CoT trajectories can be significantly pruned, yet existing methods often rely on forcing a static thinking budget, heuristic filtering, sub-optimal early exit via classification, or expensive re-training. In this paper, we introduce OS-Pruner, a lightweight plug-in framework that formulates chain-of-thought pruning as an optimal stopping problem. Given a reasoning prefix, OS-Pruner learns whether further reasoning is worth its token cost by optimizing an explicit utility that trades off final-answer accuracy against generated length. Our novel formulation enables the model to dynamically assess the sufficient point of termination for a reasoning chain. OS-Pruner is designed to be lightweight during both training and inference, and to provide users with fine-grained control over the reasoning-effort vs. accuracy trade-off. On diverse reasoning benchmarks and base models, OS-Pruner achieves 20-60\% reduction in generation length with minimal accuracy sacrifice.
\end{abstract}

\section{Introduction}
Chain-of-thought (CoT) reasoning has become a central mechanism for improving the performance of large language models on difficult mathematical, scientific, and symbolic tasks. By generating intermediate reasoning steps before producing a final answer, models can decompose a problem, explore candidate solutions, and recover from local mistakes. This additional test-time computation has proved especially important for recent reasoning models, where longer internal traces often correlate with stronger problem-solving ability \cite{weireasoning, wang2023selfconsistency,deepseek-ai_deepseek-r1_2025}. 

However, longer reasoning is not always better. In practice, reasoning models frequently continue generating after the right solution would have been decoded. They restate previous arguments, verify already-settled conclusions, pursue low-value side calculations, or simply delay producing the final answer. This \emph{overthinking} behavior increases latency and inference cost without improving accuracy, and sometimes even degrades it. More importantly, the amount of useful reasoning varies sharply across examples: a simple problem may require only a few steps, while a hard problem may benefit from a long search.

Recent work has begun to address this inefficiency from several angles. Budget-control methods, such as budget forcing and budget guidance, steer generation toward a target amount of thinking, but the target budget is typically imposed externally rather than chosen from the evolving state of the reasoning trace \cite{muennighoff_s1_2025,li_steering_2025}. Model-side compression methods, including length-penalized fine-tuning, token skipping, step-entropy compression, and related reinforcement-learning objectives, train models to produce shorter reasoning traces overall \cite{luo_o1-pruner_2025, xia_tokenskip_2025, li_making_2026, li_drpo_2026}. These approaches can be effective, but they require a costly retraining of the base reasoning model. A third line of work studies early stopping directly, using signals such as answer entropy, answer convergence, hidden-state probes, or verifier models to decide whether the current prefix is already sufficient \cite{laaouach_halt-cot_2025, liu_answer_2025, zhang2025selfverify, jiang_flashthink_2025,mao2025ES}. These methods show that substantial redundancy remains in many CoT traces. Yet most of them reduce stopping to classification or thresholding: a model predicts whether the current state is ``confident enough,'' and generation stops once a manually chosen threshold is crossed.

We argue that CoT pruning is more naturally a sequential decision problem. After each reasoning step, the system must choose between two actions: stop now and produce the final answer, or continue reasoning and pay for more tokens in the hope of improving the answer. This leads to an optimal stopping formulation: a CoT prefix should be extended only when the expected improvement from further reasoning outweighs its additional cost, and terminated once that tradeoff is no longer favorable.

We introduce \textbf{OS-Pruner}, a lightweight plug-in framework for stopping chains of thought via optimal stopping. Given an input problem and a partially generated CoT, OS-Pruner learns a stopping policy that estimates whether the model should continue reasoning or terminate and produce the final answer. The policy is trained to maximize an explicit utility: answer accuracy minus a length penalty. A single scalar parameter controls this penalty, allowing users to choose the desired operating point on the accuracy-efficiency frontier.

OS-Pruner is designed to be efficient in both training and inference. During preprocessing, we generate reasoning traces with a frozen base model and evaluate what would happen if the trace were stopped after each reasoning step. This produces fine-grained supervision for every prefix by looking at whether it is already sufficient to answer correctly. Because the base model is fixed, these rewards can be precomputed and reused, so training the stopping policy does not require repeated on-policy rollouts. Architecturally, OS-Pruner adds only a small policy head, with light fine-tuning of a small number of final transformer layers. At inference time, the policy is invoked only at reasoning-step boundaries, making inference highly efficient on modern LLM serving systems.

We summarize our contributions as follows:

\begin{itemize}
    \item We provide a rigorous formulation for CoT pruning as an optimal stopping problem, with an explicit reward that trades off final answer accuracy against generated length.
    \item We propose OS-Pruner, a lightweight plug-in stopping policy that operates on reasoning prefixes without modifying the frozen base reasoning model.
    \item We show on various mathematical reasoning benchmarks that OS-Pruner substantially reduces reasoning length while preserving accuracy, achieving 20-60\% generation-length reduction at low accuracy cost across diverse reasoning models.
    \item We show that even models trained explicitly for brevity still suffer from overthinking, with OS-Pruner delivering a performance gain when appended to a strong model-side method.
    % \item We show that OS-Pruner still improves the accuracy-efficiency frontier \aymane{change wording.} when plugged on top of models that are specifically retrained for brevity.
\end{itemize}

\section{Related Work}

\paragraph{Chain of Thought Reasoning and Test-time compute.}
Chain-of-thought (CoT) prompting has become a standard mechanism for improving the reasoning ability of large language models by externalizing intermediate computations before producing a final answer~\citep{weireasoning}. Follow-up work has shown that sampling multiple reasoning traces and aggregating their answers can further improve performance~\citep{wang2023selfconsistency}. More recently, reasoning-specialized models such as DeepSeek-R1 have pushed this paradigm further by scaling test-time computation through long, explicit reasoning traces~\citep{Guo2025R1}. These advances suggest that additional reasoning tokens can be valuable, especially on difficult mathematical and scientific problems. However, they also introduce a new efficiency problem: models often continue generating after the answer is already determined, producing redundant verification, repeated derivations, or low-value exploratory steps. Our work addresses this inefficiency by learning when a partially generated reasoning trace is already sufficient for final-answer generation.

\paragraph{Model-side methods for efficient reasoning.}
One line of work improves reasoning efficiency by modifying the base model so that it produces shorter CoTs. O1-Pruner introduces length-harmonizing fine-tuning to reduce the CoT overhead while preserving accuracy~\citep{luo_o1-pruner_2025}, while DRPO decouples length-based rewards for correct and incorrect rollouts to avoid penalizing valid but initially long reasoning trajectories~\citep{li_drpo_2026}. Other methods construct compressed reasoning traces for supervised fine-tuning. TokenSkip removes less important tokens from CoT trajectories and trains models to skip redundant parts of the reasoning process~\citep{xia_tokenskip_2025}. Step Entropy estimates the informational contribution of reasoning steps and compresses low-value steps with a special skip mechanism~\citep{li_making_2026}. A related direction avoids explicit natural-language reasoning altogether by moving some computation into continuous latent states~\citep{cheng_compressed_2024,hao_training_2025}. These model-side approaches can substantially reduce average generation length, but they require costly retraining or fine-tuning the generator. %and often entangle several behaviors: writing more concisely, changing the search strategy, avoiding dead ends, and deciding when to terminate.
On top, they can introduce unexpected behavior and degraded performance for non-reasoning and out-of-distribution tasks (e.g. commonsense questions) as they alter the base model and their generalization behavior is not well-understood.

\paragraph{Budgeted and prompt-side control.}
Another family of methods controls reasoning length at inference time through explicit budgets or prompting. Budget forcing, introduced in s1, controls test-time compute by forcing the model either to continue thinking, for example by appending continuation cues, or to terminate once a prescribed budget is reached~\citep{muennighoff_s1_2025}. Budget Guidance instead trains a lightweight predictor of remaining reasoning length and uses it to steer next-token generation toward a target budget without fine-tuning the base LLM~\citep{li_steering_2025}. Prompt-based methods such as Chain of Draft encourage models to write shorter intermediate traces by changing the format of reasoning itself~\citep{xu_chain_2025}. These approaches make reasoning length more controllable, but the target amount of thinking is typically imposed externally or guided toward a pre-specified budget. OS-Pruner instead chooses the stopping time from the evolving state of the reasoning trace: after each reasoning step, it asks whether continuing is expected to improve the answer enough to justify the additional token cost. This allows reasoning length to vary naturally with both problem difficulty and the information already present in the prefix.

\paragraph{Early stopping and self-verification.}
The closest line of work studies whether CoT generation can be stopped before the model emits its full reasoning trace. Training-free methods use signals derived from the model's current behavior. HALT-CoT stops when the entropy of the answer distribution falls below a threshold~\citep{laaouach_halt-cot_2025}; Think Just Enough uses sequence-level entropy as a confidence signal for reasoning models~\citep{sharma2025thinkjustenough}. Dynamic Early Exit monitors confidence at reasoning transition points to decide whether to truncate the next segment~\citep{yang2025dynamicearlyexitreasoning}. 
% Answer-convergence methods observe that model answers often stabilize before the end of the CoT and stop once consecutive reasoning steps produce consistent answers~\citep{liu_answer_2025,mao2025ES} 
REFRAIN combines reflective-redundancy detection with adaptive thresholding to stop once additional reasoning appears redundant~\citep{sun2025stopenough}. These methods demonstrate that substantial redundancy remains in many CoT traces, but they generally rely on heuristic or calibrated thresholds over confidence, entropy, convergence, or redundancy scores.

Training-based early-exit methods learn a verifier or probe that predicts whether the current reasoning prefix is sufficient. To the best of our knowledge, all such methods in the literature rely on training a classifier to predict correctness or a proxy thereof, and stop once the classifier confidence exceeds a fixed threshold. Hidden-state probing work shows that reasoning models contain internal signals correlated with answer correctness and train a classifier based on them~\citep{zhang2025classifier}. Answer Convergence trains a classifier to predict whether the answer remains stable until the end of the generation~\citep{liu_answer_2025}. FlashThink finetunes an external LLM prompted to verify the reasoning completeness to serve as a stopping classifier~\citep{jiang_flashthink_2025}. % These approaches are highly relevant to OS-Pruner because they also operate at intermediate reasoning states. However, most existing learned early-stopping methods reduce the decision to classification followed by thresholding: a partial CoT is judged correct, stable, or sufficient, and generation stops when the score exceeds a fixed threshold.

\section{Method}\label{sec:method}

We propose \textbf{OS-Pruner}, a lightweight, training-based plug-in method for early stopping. OS-Pruner is motivated by the observation that while multiple existing methods approach early stopping by training a classifier to decide when to stop, based on a fixed confidence threshold, the problem is more naturally and optimally framed as an optimal stopping problem. Instead of a classifier, a policy can be trained via reinforcement learning to produce probabilities of continuing only when the expected future improvement in answer quality exceeds the additional token cost.

\subsection{Problem Formulation}
Let $x$ denote an input problem. A reasoning LLM autoregressively generates a CoT $y = (y_1, \dots,y_T)$ as a sequence of reasoning steps (e.g. each $y_i$ corresponds to a new paragraph). Let $y_{\leq i}$ be the prefix consisting of the first $i$ steps. After observing $y_{\leq i}$, a stopping policy may terminate generation of the CoT and force the base model to output the final answer conditioned on this prefix.

OS-Pruner learns a parametric stopping policy $\pi_{\theta}$ which is a mapping of the current prefix to a probability of stopping: $\pi_\theta(y_{\le i}| x) \in [0,1]$.
Under this parametrization, the distribution of the stopping time $\tau$ is given by:

\begin{equation}
    \mathbb P(\tau = i) = [\prod\limits_{j<i} 1- \pi_{\theta}(y_{\le j}|x)] \cdot \pi_{\theta}(y_{\le i}|x)
\end{equation}

To train our policy, we design a reward function for stopping at the $i^{th}$ step. We denote by $A(y_{\le i}|x)$ the accuracy of the resulting answer (e.g. for numerical problems, 1 if the answer is correct and 0 if not.) And we denote by $L(y_{\le i})$ the number of tokens generated up to step $i$. We define that reward as:

\begin{equation}
    r(y_{\le i}|x)=A(y_{\le i}|x) - \lambda L(y_{\le i}).
\end{equation}

Conditionally on a problem-CoT pair $(x,y)$ with $T$ reasoning steps, the expected reward for stopping can be expressed as:
\begin{equation}\label{eq:expected-reward}
    R_\theta(y|x) = \sum\limits_{i=1}^T [\prod\limits_{j<i} 1-\pi_{\theta}(y_{\le j}|x)] \cdot \pi_{\theta}(y_{\le i}|x) \cdot r(y_{\le i}|x),
\end{equation}

Given a distribution of questions $\mathcal D_x$ and a fixed LLM policy $\pi^{LLM}(\cdot | x)$, an optimal stopping policy maximizes the objective:
\begin{equation}\label{eq:OS-objective}
    \mathcal J(\theta) = \mathbb E_{x \in \mathcal D_x}[\mathbb E_{y \sim \pi^{LLM}(\cdot | x)}[R_\theta(x,y)] ].
\end{equation}

\subsection{Interpretation of $\lambda$}

From an operations perspective, $\lambda$ can be interpreted as the increase in confidence in accuracy needed to justify paying for one more token. From an optimization perspective, by sweeping over a wide range of values of $\lambda$, we trace a frontier of policies trading accuracy for length reduction. The parameter $\lambda$ allows for fine-grained control over that trade-off.

\section{Value Function View and Classification Sub-Optimality}

The value function view sheds light on the advantage of our optimal stopping formulation. The key distinction between OS-Pruner and confidence-threshold methods is that stopping is not determined solely by whether the current answer is likely to be correct. The policy must compare the value of stopping now with the value of continuing, paying the cost of more tokens to increase confidence in producing a correct answer.
Let $V_\lambda(x, y_{\leq i})$ denote the optimal expected reward from state $(x, y_{\leq i})$. It satisfies the Bellman equation

\begin{equation}
V_\lambda(x, y_{\leq i}) = \max \{A(y_{\leq i} \mid x),\mathbb E_{y_{i+1} \sim \pi^{LLM}(. \mid x, y_{\leq i})}[V_{\lambda}(y_{\le i+1})-\lambda(L(y_{\le i+1})-L(y_{\le i})))\}.
\end{equation}
The first term is the value of stopping immediately. The second term is the value of generating an extra reasoning step and then acting optimally, which we will call the continuation value $C_\lambda(x, y_{\leq i})$.

Thus, the optimal stopping policy stops exactly when 

\begin{equation}
    A(y_{\leq i} \mid x) \geq C_\lambda(x, y_{\leq i}).
\end{equation}

In contrast, classification-based correctness methods stop according to the condition

\begin{equation}
    A(y_{\leq i} \mid x) \geq \gamma.
\end{equation}

As opposed to the fixed threshold $\gamma$, continuation value is state dependent. Two partial CoTs with the same correctness can require different decisions if one has promising future reasoning paths and the other does not.
% \subsection{Learning a stopping policy from fixed traces}

% Equation \ref{eq:expected-reward} ensures that this objective is differentiable in $\theta$ and since all CoTs and rewards are precomputed from the base model, OS-Pruner does not require expensive on-policy rollouts while training the stopping policy.
%\subsection{Inference}
%At inference time, we run the base model's generation. After each reasoning step $i$, we stop the thinking and force final answer generation with probability $1-\pi_\theta(y_{\le j}|x)$ by appending the end of reasoning token.

% \subsection{Why correctness classification is not enough}

% Several early-exit approaches train a classifier to estimate whether the current reasoning prefix is already sufficient, and then stop once the predicted probability exceeds a fixed threshold \cite{liu_answer_2025,jiang_flashthink_2025,zhang2025classifier}. These approaches exploit the useful fact that intermediate states contain information about answer quality. However, a fixed correctness threshold does not solve the optimal stopping problem. Its policy stops when $A(y_{\leq i} \mid x) \geq \gamma$ for some threshold $\gamma$. This policy treats $\gamma$ as a constant approximation to the state-dependent continuation value $C_\lambda(x, y_{\leq i})$ and can be arbitrarily suboptimal even with a perfect classifier as the following theorem shows.

Based on that analysis, we show that classification based on correctness can lose arbitrarily large value, even when the threshold $\gamma$ is chosen optimally:

\begin{theorem}[Threshold classification can have an arbitrarily large gap]
\label{thm:classification-arbitrary-gap}
For any \(\lambda\in(0,1)\) and any \(K>0\), there exists a finite-horizon CoT stopping
problem such that, even when \(A(y_{\le i}\mid x)\) is known exactly,
\[
V_\lambda(\pi^\star)
>
K\cdot \sup_{\gamma} V_\lambda(\pi_\gamma^{\mathrm{cls}}),
\]
where $V_\lambda(\pi)$ is the expected reward of policy $\pi$, \(\pi^\star\) is the optimal stopping policy and
\[
\pi_\gamma^{\mathrm{cls}}(y_{\le i}\mid x)
=
\mathbf{1}\!\left\{A(y_{\le i}\mid x) \ge \gamma\right\}
\]
is the best fixed-threshold correctness classifier.
\end{theorem}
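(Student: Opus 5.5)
The plan is to build an explicit tree-structured CoT stopping problem in which two kinds of prefixes have the \emph{same} correctness $A(y_{\le i}\mid x)=0$ but very different continuation values $C_\lambda$. Any fixed threshold $\gamma$ must then treat them identically, while the optimal policy separates them. I will tune the branch probabilities so that $V_\lambda(\pi^\star)$ is a small positive number and every threshold policy has strictly negative value. The inequality $V_\lambda(\pi^\star)>K\cdot\sup_\gamma V_\lambda(\pi^{\mathrm{cls}}_\gamma)$ then holds for every $K>0$ simultaneously.

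\textbf{Construction.} Take a single question $x$. The first reasoning step has length one token and leads to one of three prefixes.
\begin{itemize}
\item $G$, with probability $p_1$: $A=1$, and the trace terminates there.
\item $P$ (promising), with probability $p_3$: $A=0$, and the next step, also one token, deterministically reaches a terminal prefix with $A=1$.
\item $D$ (dead), with probability $p_2$: $A=0$, followed by a deterministic chain of $N-1$ further one-token steps, all with $A=0$.
\end{itemize}
At the horizon, any policy is forced to stop. The optimal policy stops at $G$, continues at $P$, and stops at $D$; the only nontrivial check is at $P$, where continuing gives $1-2\lambda>-\lambda$ because $\lambda<1$. Hence
\[
V^\star=p_1(1-\lambda)+p_3(1-2\lambda)-\lambda p_2 .
\]

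\textbf{Threshold policies.} Since $A\in\{0,1\}$ on every prefix, only two behaviours are possible.
\begin{itemize}
\item If $\gamma\le 0$, the policy stops at every first-step prefix. This loses the gain at $P$, giving value $V^\star-p_3(1-\lambda)$.
\item If $\gamma>0$, the policy continues at both $P$ and $D$ (it never stops on $A=0$), running $D$ to the horizon. This gives value $V^\star-p_2\lambda(N-1)$.
\end{itemize}
The case $\gamma>1$ reduces to the second case, or is worse.

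\textbf{Tuning the parameters.} Fix any small $c>0$ and set $p_3=c$. As $(p_1,p_2)$ ranges over $p_1+p_2=1-c$, the quantity $V^\star$ varies continuously. At $p_1=1-c$ it is positive, since $(1-c)(1-\lambda)+c(1-2\lambda)=1-\lambda-c\lambda>0$ for $c$ small. At $p_1=0$ it is negative for $c$ small, since it is then approximately $-\lambda$. By the intermediate value theorem, choose $p_1$ so that $V^\star=\eta$ with $0<\eta<c(1-\lambda)$; this forces $p_2>0$. Then the $\gamma\le0$ value is $\eta-c(1-\lambda)<0$. Choosing $N$ large enough that $p_2\lambda(N-1)>\eta$ makes the $\gamma>0$ value negative as well. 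Therefore
\[
\sup_\gamma V_\lambda(\pi^{\mathrm{cls}}_\gamma)<0<V_\lambda(\pi^\star),
\]
which gives the claim for every $K$.

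\textbf{Main obstacle.} The delicate point is making this work for $\lambda$ close to $1$. There, any positive reward requires stopping almost immediately, so all gains must come from one-token steps. This is why every step above has unit length and why I tune the probabilities rather than the lengths.

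If a nondegenerate version with $\sup_\gamma V_\lambda(\pi^{\mathrm{cls}}_\gamma)>0$ is preferred, I would let $A$ take fractional values, interpreted as expected accuracy of the forced answer. I would then make the threshold-achievable value of order $\eta/K$ by shrinking the branch on which thresholds coincide with $\pi^\star$.
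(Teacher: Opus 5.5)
\textbf{Verdict.} Your proof is correct. With $A\in\{0,1\}$ and unit-length steps, the optimal policy continues only at $P$. The two threshold behaviours are exactly the ones you list; $\gamma>1$ coincides with $0<\gamma\le 1$ because the $A=1$ prefixes are terminal. Once $V_\lambda(\pi^\star)=\eta\in(0,c(1-\lambda))$ and $N$ is large, both threshold values are negative, so the displayed inequality holds for every $K>0$ at once.

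\textbf{What is shared with the paper.} The core mechanism is the same: two states with identical $A$ but different continuation values, plus a long dead branch so that ``continue everywhere'' is bad.

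\textbf{What differs.} The way the inequality is certified is different.
\begin{itemize}
\item The paper places the indistinguishable states $s_g,s_b$ at length $L=0$ with a common fractional value $A=\varepsilon$.
\item Stopping everywhere then earns exactly $\varepsilon>0$.
\item Continuing everywhere is pushed below $\varepsilon$ by making the bad branch length $M$ large.
\item Hence $\sup_\gamma V_\lambda(\pi^{\mathrm{cls}}_\gamma)=\varepsilon$, and the ratio $\frac{1-\lambda}{2\varepsilon}+\frac12$ exceeds $K$ once $\varepsilon<\frac{1-\lambda}{2K}$.
\end{itemize}
This is precisely the nondegenerate version you sketch at the end. The zero-length decision point is also what lets the paper avoid the obstruction you identified: for $\lambda\ge 1/2$, reaching $A=1$ costs two tokens in your model and $1-2\lambda\le 0$. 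That obstruction is what forced you to add the anchor branch $G$ and tune the probabilities.

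\textbf{What each route buys.} Your construction stays closer to the paper's own definitions: $A$ is a genuine $0/1$ accuracy and every reasoning step costs at least one token. It also gives a single instance that works for all $K$. The price is that your conclusion rests on a sign separation, $\sup_\gamma V_\lambda(\pi^{\mathrm{cls}}_\gamma)<0<V_\lambda(\pi^\star)$, rather than a genuine multiplicative gap. Once negative values are allowed, any instance with $V_\lambda(\pi^\star)>0\ge\sup_\gamma V_\lambda(\pi^{\mathrm{cls}}_\gamma)$ satisfies the literal statement for every $K$. So your argument proves the statement as written, but not the ``$K$-fold gap'' reading suggested by the theorem's title. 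The paper's positive-denominator ratio does establish that reading.
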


The proof is deferred to Appendix \ref{appendix:proof}

\section{Implementation Details}\label{sec:implementation}

\subsection{Data Preprocessing Stage}

We demonstrate the effectiveness of our method on common math reasoning benchmarks. For a given reasoning model, we start by choosing a dataset of problems with a ground truth answer $\{(x_i,g_i)\}$. We then use vLLM \cite{kwon_efficient_2023} to generate reasoning traces $\{y_i\}$ for each problem in the dataset.

For each reasoning trace $y_i$, we manually intervene after each reasoning step, terminate the CoT, and prompt the model to give a final answer. We force the model to give only a final answer with no extra reasoning by prompting it to box its final answer and manually adding a box at the start of the answer block. This prevents a key issue with previous methodologies where the models kept thinking in the answer section~\citep{zhang_making_2025}, and is crucial to efficiently compute the answers. We then grade the answer against the ground truth using Math-Verify \cite{kydlicek_math-verify_2026} to compute the accuracy signals. This allows fast-training since the objective's expectation can be computed exactly, with no need to resample traces as the policy updates.

To efficiently compute the rewards, we use SGLang's \texttt{fork} functionality to make a tree of requests, branching into final answer computation and reasoning continuation after each paragraph.

% To process the large amount of requests efficiently and avoid repeatedly processing paragraphs shared between requests, we leverage SGLang's \texttt{fork} functionality powered by RadixAttention \cite{zheng_sglang_2024}. We make a tree of requests where each reasoning trace has a fork at the end of each paragraph, branching into a branch that continues reasoning and another that computes the final answer.

\subsection{Policy Architecture}

The stopping policy of OS-Pruner is parameterized as a linear head that takes the last hidden state $h_{l}$, which has been found to effectively encode the correctness of a partial CoT \citep{zhang2025classifier}, and outputs $\pi_{\theta}(y|x)=\sigma(W^\top h_{l}+b)$. Leveraging the base model's pretrained understanding and architecture designed for language processing, we further fine-tune its last $n$ self-attention layers, where $n$ is very small (we choose $n=2$ in all our experiments) to keep the policy lightweight.

We emphasize that we keep the base reasoning model weights frozen. The finetuned layer and additional linear layer at the end serve as an additional policy head, only invoked at the end of each paragraph at inference time. This is highly efficient on modern LLM serving frameworks such as vLLM, since each paragraph can be processed in parallel as a pre-fill (as opposed to autoregressively), and only $n=2$ layers of KV cache have to be recomputed. (Algorithm. \ref{alg:policy-inference})

\begin{algorithm}[h]
\caption{Inference with the Stopping Policy}
\label{alg:policy-inference}
\begin{algorithmic}[1]
\Require Prompt $x$; base LLM with $L$ layers (last $n$ fine-tuned); linear head $(W, b)$
\While{not stopped}
    \State Generate next token $x_{t+1} \sim p_{\text{LLM}}(\cdot \mid x_{1:t})$
    \If{$x_{t+1}$ ends a paragraph}
        \State $h \gets$ pre-fill last $n$ layers in parallel from cached layer-$(L{-}n)$ states
        \State $\pi_\theta \gets \sigma(W^\top h_l + b)$
        \State Stop with probability $\pi_\theta$
    \EndIf
\EndWhile
\State Elicit final answer from $x_{1:t}$
\end{algorithmic}
\end{algorithm}

We further note the training efficiency of our architecture. Since the base reasoning model is causal, the last hidden state corresponding to a paragraph is not affected by any subsequent tokens. Hence, we can process the entire reasoning trace in one forward pass and apply the linear layer to the relevant hidden states \textbf{in parallel}, allowing us to circumvent processing the reasoning traces one paragraph at a time.

\subsection{Progressive Training}

\begin{figure}
    \centering
    \includegraphics[width=1.0\linewidth]{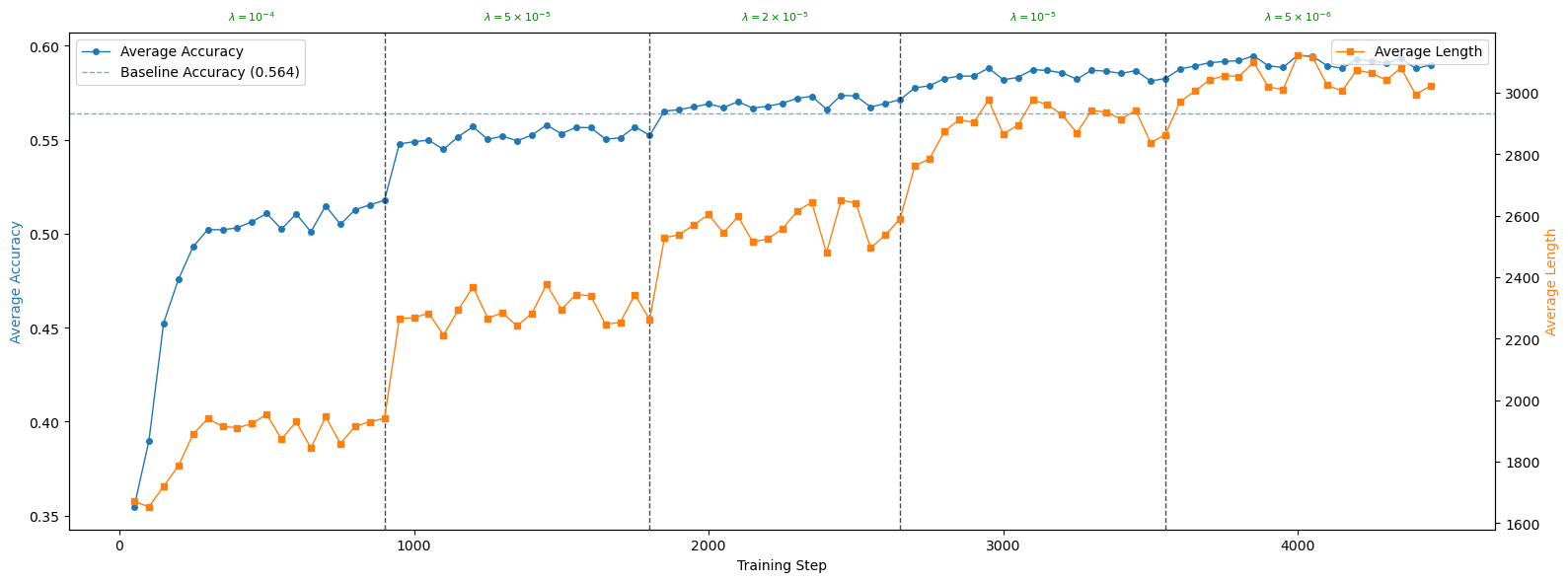}
    \caption{Training dynamics for DeepSeek-R1-Distill-Qwen-7B. The evaluations are conducted on a held-out validation subset of our training data with $500$ samples. The baseline length is 5182 tokens.}
    \label{fig:training dynamics}
\end{figure}

We empirically find that for relatively low values of $\lambda$, training from scratch gets stuck at a trivial local minimum where the policy continues to the end with very high probability. To address that issue, we start with a moderately high value of $\lambda$ and progressively anneal it during training at periodic points.

In Figure \ref{fig:training dynamics}, we study the training dynamics of our policies on the DeepSeek-R1-Distill-Qwen-7B model. Dashed lines show the points in training where $\lambda$ was annealed. Initially, the policy quickly learns to delay stopping and gain a major increase in accuracy. After each anneal, we see a jump in both accuracy and length. Finally, over the course of the training, the gap between accuracy and length diminishes as the $\lambda$ factor that trades them approaches $0$.

\section{Experiments}
We evaluate OS-Pruner with three goals. First, we test whether an optimal-stopping policy improves the accuracy-efficiency trade-off over training-based early-exit baselines. Second, we study the main empirical behavior of OS-Pruner across model families and benchmark difficulty levels. Third, we test whether OS-Pruner remains useful when appended to a model that has already been optimized for concise reasoning through a model-side method.

\textbf{Models.} We experiment with 3 models:
\begin{itemize}
    \item DeepSeek-R1-Distill-Qwen-7B~\citep{deepseek-ai_deepseek-r1_2025}: a popular open-weight reasoning model for reasoning and early-stopping benchmarks.
    \item GPT-OSS-20B~\citep{openai_gpt-oss-120b_2025}: a stronger and more concise larger-scale reasoning model, allowing us to evaluate whether OS-Pruner remains useful when the base model already produces shorter traces. 
    \item DRPO-7B: obtained from DeepSeek-R1-Distill-Qwen-7B using DRPO~\cite{li_drpo_2026} using their heaviest length penalty. DRPO is a state of the art model-side efficient-reasoning method that substantially reduces overthinking and even improves accuracy. This model allows us to test whether OS-Pruner can still improve upon model-side compression.
\end{itemize}

\textbf{Datasets.} We train our policies on the DeepScaleR-Preview-Dataset~\citep{deepscaler2025}, consisting of approximately 40000 problems sourced from AIME (1984-2023), AMC (prior to 2023), Omni-MATH~\citep{gao_omni-math_2024}, and Still~\citep{min_imitate_2024}. We hold out $500$ problems for validation. We then evaluate them on held-out mathematical reasoning benchmarks with various levels of difficulty: GSM8K~\cite{cobbe_training_2021} for easy grade-school arithmetic, MATH-500~\cite{hendrycks_measuring_2021} for medium-difficulty competition mathematics, and AIME24/25 (aggregated) for challenging olympiad-style problems.

\textbf{Policy training.} For each base model, we first generate full reasoning traces on the training set and then construct stopping labels by terminating the trace after each reasoning step and forcing the model to produce only a final answer. We use Math-Verify~\cite{kydlicek_math-verify_2026} to compare extracted answers against ground truth. Following the method described in Section~\ref{sec:implementation}, the base reasoning model is kept frozen, only the stopping head and the last two transformer layers used by the policy are fine-tuned. On 4 A100 GPUs, this dataset preparation step takes approximately 4, 2.5, and 1 day for DeepSeek-R1-Distill-Qwen-7B, GPT-OSS-20B, and DRPO-7B respectively.
We cap generation at $15{,}000$ tokens and remove training examples whose original trace does not terminate within this budget.
For DeepSeek-R1-Distill-Qwen-7B, we train policies with
\[
\lambda \in \{10^{-4}, 5{\times}10^{-5}, 2{\times}10^{-5}, 10^{-5}, 5{\times}10^{-6}, 2{\times}10^{-6}, 10^{-6}, 0\}.
\]
For GPT-OSS-20B and DRPO-7B, we train policies with
\[
\lambda \in \{2{\times}10^{-4}, 10^{-4}, 5{\times}10^{-5}, 2{\times}10^{-5}, 10^{-5}, 0\}.
\]

For optimization, we use AdamW with learning rate $10^{-5}$, $(\beta_1,\beta_2)=(0.9,0.999)$, and $\epsilon=10^{-8}$.

\textbf{Evaluation.}
We evaluate accuracy using Pass@1, averaged over 16 sampled solutions for each problem, and
measure efficiency using the average number of generated reasoning tokens before final-answer
elicitation. To summarize the accuracy--efficiency trade-off with a single scalar, we additionally
report the Accuracy Efficiency Score (AES), which is used in the efficient reasoning literature ~\citep{luo_o1-pruner_2025, li_drpo_2026, ning2025thoughtsequal}. AES is defined as

\begin{equation}
\text{AES} = 
\begin{cases} 
\alpha \cdot \Delta\text{Length} + \beta \cdot |\Delta\text{Acc}|, & \text{if } \Delta\text{Acc} \geq 0 \\ 
\alpha \cdot \Delta\text{Length} - \gamma \cdot |\Delta\text{Acc}|, & \text{if } \Delta\text{Acc} < 0, 
\end{cases}
\end{equation}

where $\Delta \text{Length}=\frac{\text{Length}_{\text{baseline}}-\text{Length}_{\text{method}}}{\text{Length}_{\text{baseline}}}$ and $\Delta \text{Acc}=\frac{\text{Acc}_{\text{method}}-\text{Acc}_{\text{baseline}}}{\text{Acc}_{\text{baseline}}}$. We use the original $\alpha=1$ and $\beta=3$ values but raise $\gamma$ to $7$ to emphasize the importance of preserving accuracy. For each method, we report the stopping policy with the best AES.

For DeepSeek-R1-Distill-Qwen-7B and DRPO-7B, we set temperature $=0.6$ and top-$p=0.95$.
For GPT-OSS-20B, we set temperature $=1.0$ and top-$p=1.0$. Matching training, we cap
generation at $15{,}000$ tokens. After stopping, we append a boxed-answer prefix at the start
of the answer section to force the model to produce a final answer instead of continuing to reason
in the answer section; we ablate this choice in Appendix~\ref{appendix:ablation}.

\textbf{Baselines.} We compare against three training-based early stopping methods:

\begin{itemize}
    \item \textbf{Classification Probe\cite{zhang2025classifier}:} 
    trains a stopping policy to predict whether the current partial CoT leads to a correct answer; generation stops when the classifier score exceeds a fixed threshold. For fair comparison, we train the classifier with the same architecture as OS-Pruner.
    \item \textbf{Answer Convergence \cite{liu_answer_2025}:} trains a classifier to predict whether the answer produced from the current prefix will remain unchanged until the end of the original trace. While the original paper uses an LSTM for their classifier architecture, we scale it to use the OS-Pruner architecture.
    \item \textbf{FlashThink \cite{jiang_flashthink_2025}}: uses a verifier-style LLM to judge whether the current reasoning is sufficient. Following their experiments, we use the Qwen2.5-7B-Instruct model, but for fairness and training efficiency, we fine-tune only the final two layers of the verifier model rather than the full 7B model used in the original work.
\end{itemize}

We test all baselines with all inference thresholds $\in [0.7, 0.75, 0.8, 0.85,0.9]$.

\subsection{Optimal stopping improves the accuracy efficiency tradeoff}
Figure~\ref{fig:overall_pareto_frontiers} shows the accuracy-length trade-off for all models and benchmarks. OS-Pruner consistently lies on or near the Pareto frontier, while fixed-threshold early-exit baselines degrade more sharply as the length budget is reduced.
In the low accuracy-sacrifice regime (<2\%), OS-Pruner reduces reasoning length by roughly 20-60\% on most model-dataset pairs while preserving accuracy. The strongest gains occur on easier or medium-difficulty datasets, where overthinking is most prevalent. For example, with DeepSeek-R1-Distill-Qwen-7B, OS-Pruner reduces length by 59.3\% on GSM8K and 52.8\% on MATH-500 at the best-AES operating point, while changing accuracy by only $-0.7$ and $+2.7$ percentage points, respectively. On AIME, the same policy is more conservative, reducing length by only 6.9\% at the best-AES point, which is expected since extra computation is valuable on harder tasks. Table~\ref{tab:AES table} summarizes the best operating point for each method according to the AES.

\begin{figure*}[t]
    \centering
    
    % Row 1: DeepSeek Model
    \begin{subfigure}{\textwidth}
        \centering
        \includegraphics[width=\linewidth]{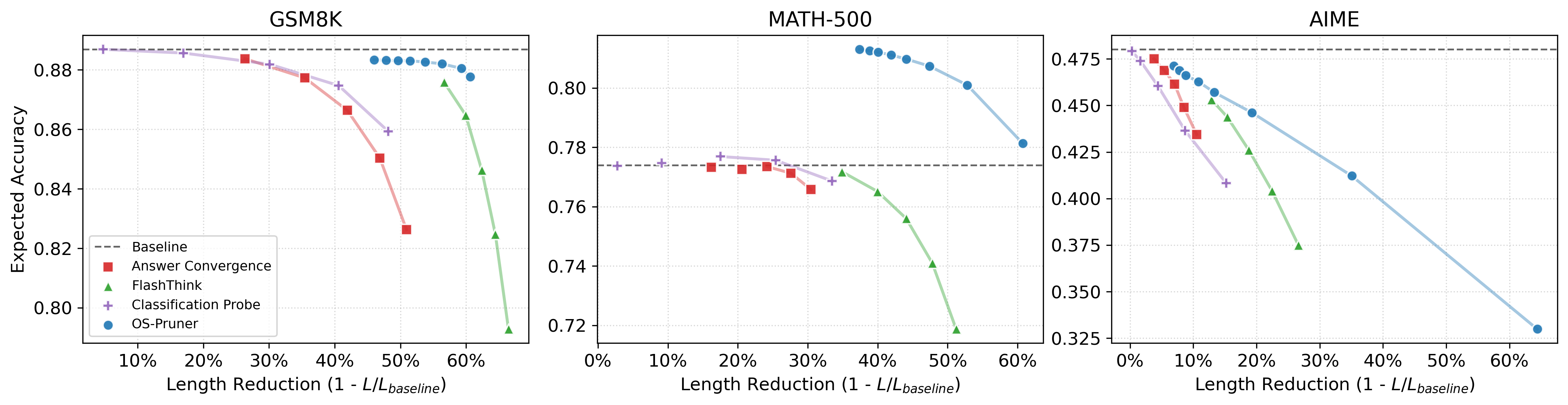}
        \caption{DeepSeek-R1-Distill-Qwen-7B}
        \label{fig:model_deepseek}
    \end{subfigure}
    
    \vspace{0.5em} % Vertical space between rows
    
    % Row 2: GPT-OSS Model
    \begin{subfigure}{\textwidth}
        \centering
        \includegraphics[width=\linewidth]{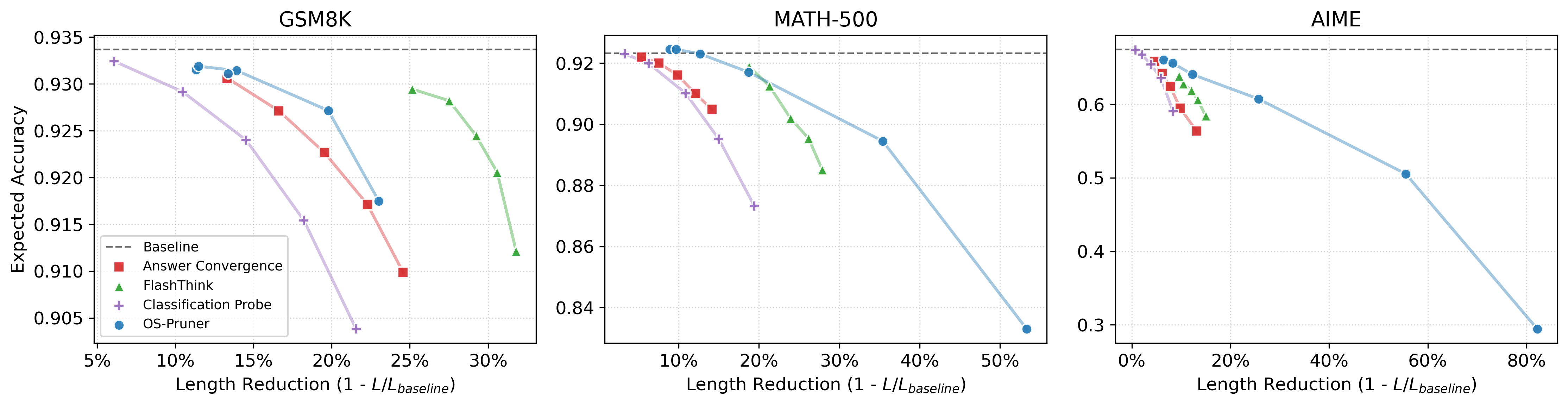}
        \caption{GPT-OSS-20B}
        \label{fig:model_gpt_oss}
    \end{subfigure}
    
    \vspace{0.5em}
    
    % Row 3: DRPO Model
    \begin{subfigure}{\textwidth}
        \centering
        \includegraphics[width=\linewidth]{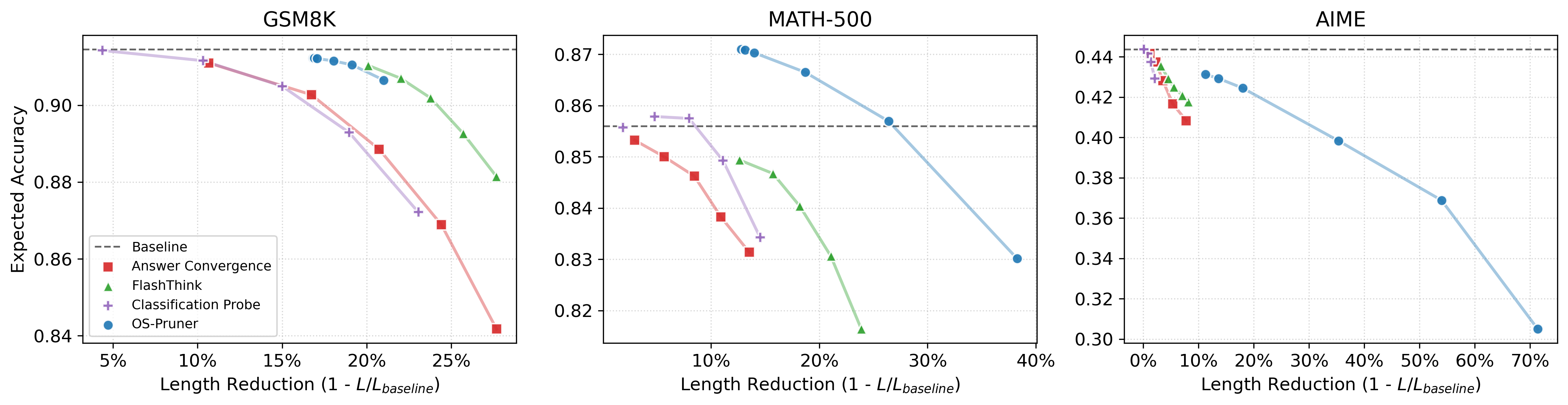}
        \caption{DRPO-7B}
        \label{fig:model_drpo}
    \end{subfigure}
    
    \caption{Pareto Frontiers of Expected Accuracy vs. Length Reduction across various reasoning-heavy datasets and models. }
    \label{fig:overall_pareto_frontiers}
\end{figure*}

\begin{table}
    \centering
    \resizebox{\textwidth}{!}{%
        \begin{tabular}{lcccccccccccc}
        \toprule
        \multirow{2}{*}{Model / Method} & \multicolumn{3}{c}{GSM8K} & \multicolumn{3}{c}{MATH-500} & \multicolumn{3}{c}{AIME} & \multicolumn{3}{c}{\textit{AVERAGE}} \\
        \cmidrule(lr){2-4} \cmidrule(lr){5-7} \cmidrule(lr){8-10} \cmidrule(lr){11-13}
         & Acc & Len & AES & Acc & Len & AES & Acc & Len & AES & Acc & Len & AES \\
        \midrule
        \multicolumn{13}{l}{\textbf{DeepSeek-R1-Distill-Qwen-7B}} \\
        Baseline & 88.7 & 1219 & 0.00 & 77.4 & 3377 & 0.00 & 48.0 & 9873 & 0.00 & 71.4 & 4823 & 0.00 \\
        Classification Probe & 87.5 & 724 & 0.34 & \underline{76.9} & 2247 & 0.31 & \textbf{47.9} & 9846 & -0.00 & \underline{70.8} & 4272 & 0.22 \\
        Answer Convergence & 86.6 & 708 & 0.32 & 76.6 & 2348 & 0.26 & \underline{47.5} & 9500 & \underline{0.01} & 70.2 & 4185 & 0.20 \\
        FlashThink & \underline{87.6} & \underline{528} & \underline{0.51} & 76.5 & \underline{2027} & \underline{0.36} & 45.3 & \textbf{8602} & -0.01 & 69.8 & \underline{3719} & \underline{0.29} \\
        OS-Pruner & \textbf{88.0} & \textbf{496} & \textbf{0.56} & \textbf{78.1} & \textbf{1325} & \textbf{0.63} & 47.1 & \underline{9190} & \textbf{0.02} & \textbf{71.1} & \textbf{3671} & \textbf{0.40} \\
        \midrule
        \multicolumn{13}{l}{\textbf{GPT-OSS-20B}} \\
        Baseline & 93.4 & 278 & 0.00 & 92.3 & 1126 & 0.00 & 67.5 & 6001 & 0.00 & 84.4 & 2468 & 0.00 \\
        Classification Probe & 92.4 & 237 & 0.10 & \textbf{92.0} & 1055 & 0.05 & \textbf{67.4} & 5958 & \textbf{0.00} & \textbf{83.9} & 2417 & 0.05 \\
        Answer Convergence & 92.3 & 223 & 0.14 & 91.6 & 1015 & 0.06 & 65.8 & 5723 & -0.04 & \underline{83.2} & 2321 & 0.06 \\
        FlashThink & \textbf{92.8} & \textbf{201} & \textbf{0.25} & \underline{91.9} & \underline{915} & \underline{0.17} & 63.9 & \textbf{5426} & -0.09 & 82.9 & \textbf{2181} & \underline{0.11} \\
        OS-Pruner & \underline{92.7} & \underline{223} & \underline{0.17} & 89.4 & \textbf{728} & \textbf{0.21} & \underline{66.0} & \underline{5617} & \underline{-0.01} & 82.7 & \underline{2189} & \textbf{0.12} \\
        \midrule
        \multicolumn{13}{l}{\textbf{DRPO-7B}} \\
        Baseline & 91.5 & 398 & 0.00 & 85.6 & 1096 & 0.00 & 44.4 & 4359 & 0.00 & 73.8 & 1951 & 0.00 \\
        Classification Probe & 90.5 & 338 & 0.10 & \textbf{85.8} & 1009 & 0.08 & \textbf{44.4} & 4354 & 0.00 & \textbf{73.5} & 1900 & 0.06 \\
        Answer Convergence & 90.3 & 331 & 0.11 & 84.6 & 1004 & 0.04 & \underline{44.2} & 4307 & \underline{0.00} & \underline{73.0} & 1881 & 0.05 \\
        FlashThink & \underline{90.7} & \textbf{310} & \textbf{0.18} & 84.7 & \underline{924} & \underline{0.11} & 43.5 & \underline{4221} & -0.01 & 73.0 & \underline{1818} & \underline{0.09} \\
        OS-Pruner & \textbf{91.0} & \underline{322} & \underline{0.17} & \underline{85.7} & \textbf{807} & \textbf{0.27} & 36.9 & \textbf{2006} & \textbf{0.17} & 71.2 & \textbf{1045} & \textbf{0.20} \\
        \bottomrule
        \end{tabular}
    }
    \caption{Summary of the stopping policy with best AES for each method. In each group, the best result is \textbf{bolded} and the second best is \underline{underlined}.}
    \label{tab:AES table}
\end{table}

\section{Discussion}

With the vast variety of efficient reasoning approaches, we conclude by summarizing OS-Pruner's placement in the literature and relation to previous approaches. Within training-based early stopping methods, we show both theoretically and empirically that our optimal stopping formulation surpasses output-side methods in terms of trading accuracy for efficiency.  Training-free approaches, with more handcrafted stopping conditions, can be used to augment our approach, as their stopping criteria can be used as input \emph{features} in the training of our stopping policies.

Budget-based methods solve a different problem within the efficiency framework: given a problem and a fixed constrained budget, maximize the chances of producing a correct answer. Our approach operates in the dynamic reasoning regime where more thinking is decided based on the problem and partial reasoning so far.

Finally, while model-side compression methods demonstrate great performance, their learning entangles several behaviors: writing more concisely, changing the search strategy, avoiding dead ends, and deciding when to terminate. Learning to optimally stop only from examples of complete reasoning traces and their final rewards can be difficult. In contrast, our approach features fine-grained reward labeling after each reasoning step and a plug-in policy trained solely with a stopping objective. Our experiments on DRPO-7B demonstrate that our approach can provide additional value when used together with a leading model-side compression method.

\textbf{Broader Impact:} this work aims to make reasoning language models more efficient by reducing unnecessary chain-of-thought generation while preserving task accuracy. By lowering inference latency and token usage, OS-Pruner can reduce the computational cost and energy footprint of deploying reasoning models, and may make such models more accessible in settings with limited compute budgets. The method also gives practitioners explicit control over the trade-off between reasoning length and accuracy through the stopping-cost parameter $\lambda$, which can help adapt deployment to different resource and reliability requirements. Our work can further help democratize artificial intelligence by improving the efficiency and generation quality of smaller open-source models that suffer most from overthinking.

\textbf{Limitations \& Future Work:} while we have demonstrated the success of our approach on mathematical reasoning tasks, future work can explore more general tasks such as coding and scientific reasoning. Furthermore, scaling the experiments beyond 20B parameters is crucial to understand if our approach can improve on frontier models. Finally, while we focus our efforts on fair scientific comparison to previous work, exploration along the architecture and input features dimensions can potentially improve our method's absolute performance.

% \section{Detailed Comparison to Previous Work}

% (rough idea)

% We start by discussing the early-exit training-based methods we use as baselines. All of them rely on classification, and to the best of our knowledge, we are the first to uncover the optimal stopping framing of the problem. If we're doing any theory, this could be the place: how much can we prove about optimal stopping outperforming classification?

% Training-free methods exist too (give examples, Adam found good ones). The signals they use can be leveraged and combined as policy input features in our formulation.

% Other methods such as budget forcing and budget guidance impose a fixed generation budget, but our approach allows the reasoning length to dynamically very with difficulty as needed.

% Finally, model-side training methods have shown immense success. However, learning optimal stopping with pure RL or SFT is difficult because there are multiple axes of improvement. These methods could improve efficiency by removing filler words, improving the model's ability to kill/not explore dead-ends, etc. and stopping is only one part of the equation. Our method is laser-focused on stopping and explicitly computes reward signals after every single reasoning step, making efficient use of the data and providing fine-grained annotations for stopping. We show in a separate section that DRPO, a leading model-side method, despite massively cutting the length, still has a lot of room to grow in stopping that we can better fill.
\newpage

\bibliographystyle{unsrt}
\bibliography{CoT}

%%%%%%%%%%%%%%%%%%%%%%%%%%%%%%%%%%%%%%%%%%%%%%%%%%%%%%%%%%%%
\appendix

\section{Ablation of Final Answer Forcing}\label{appendix:ablation}

\begin{figure*}[h]
    \centering
    
    % Row 1: DeepSeek Model
    \begin{subfigure}{\textwidth}
        \centering
        \includegraphics[width=\linewidth]{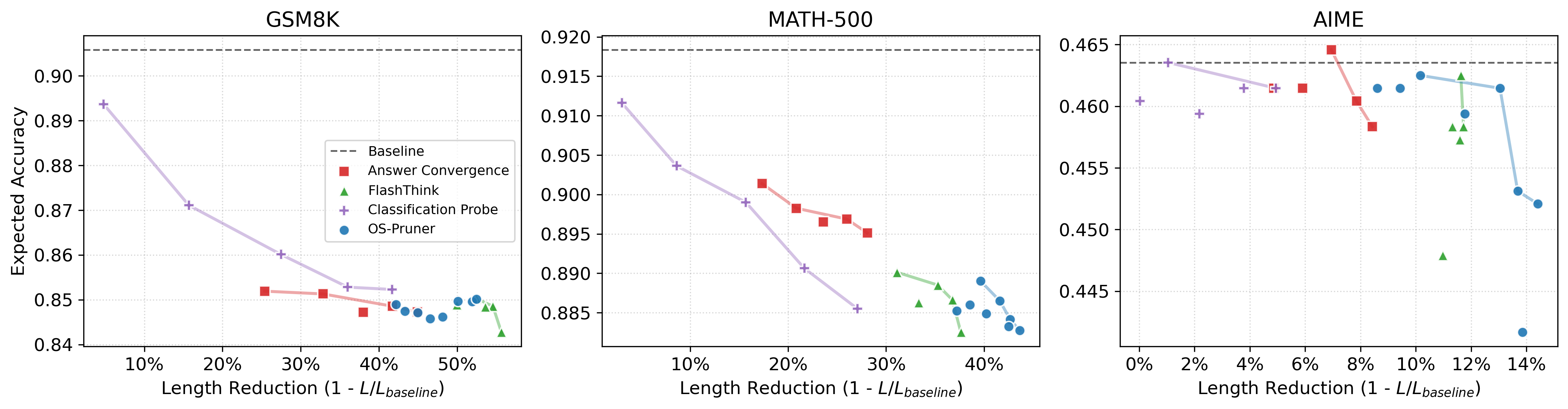}
        \caption{DeepSeek-R1-Distill-Qwen-7B}
        \label{fig:model_deepseek}
    \end{subfigure}
    
    \vspace{0.5em} % Vertical space between rows
    
    % Row 2: GPT-OSS Model
    \begin{subfigure}{\textwidth}
        \centering
        \includegraphics[width=\linewidth]{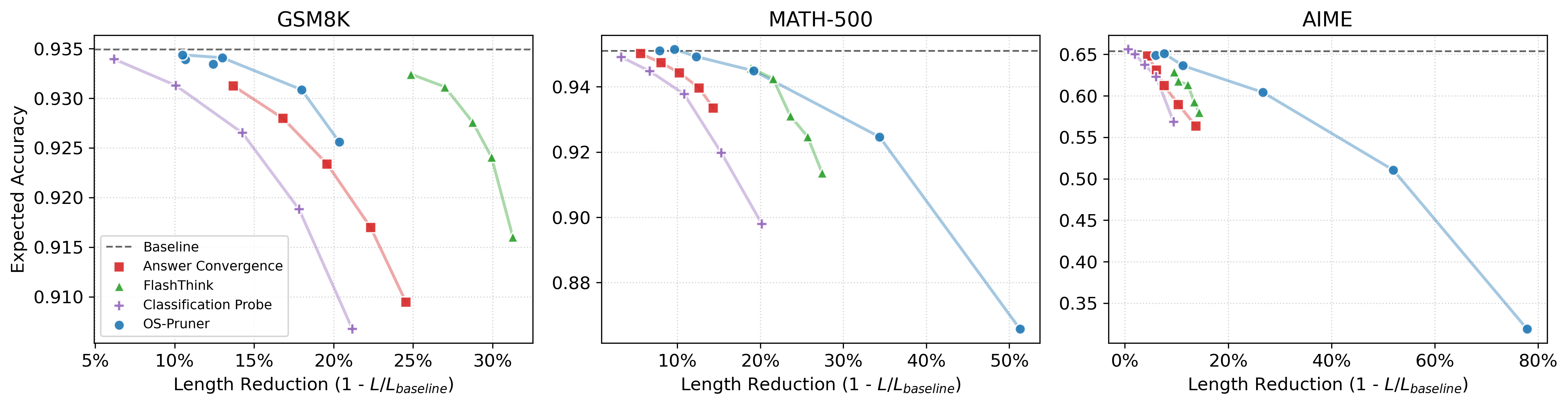}
        \caption{GPT-OSS-20B}
        \label{fig:model_gpt_oss}
    \end{subfigure}
    
    \vspace{0.5em}
    
    % Row 3: DRPO Model
    \begin{subfigure}{\textwidth}
        \centering
        \includegraphics[width=\linewidth]{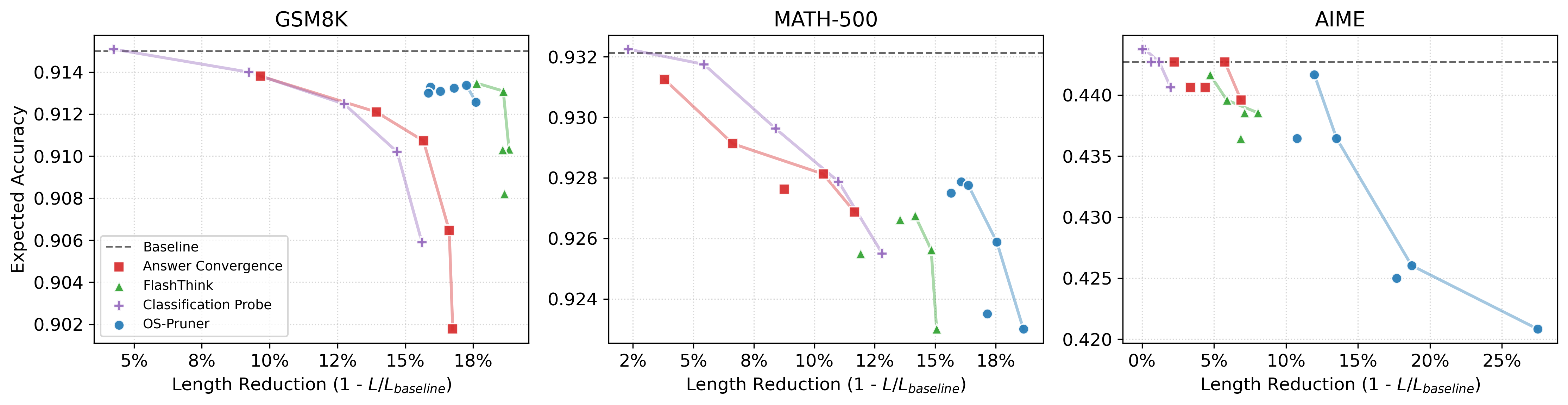}
        \caption{DRPO-7B}
        \label{fig:model_drpo}
    \end{subfigure}
    
    \caption{Pareto Frontiers of Expected Accuracy vs. Length Reduction when the models are allowed to think freely in the answer section.}
    \label{fig:full_pareto_frontiers}
\end{figure*}

We ablate the choice of forcing a final answer by appending a box in the answer section and let the model behave freely. We accumulate the length of the reasoning and answer sections and report our results in Figure. \ref{fig:full_pareto_frontiers}.

While our main results from Figure \ref{fig:overall_pareto_frontiers} transfer seamlessly for GPT-OSS-20B, and while OS-Pruner's length reduction still significantly exceed the baselines, we observe a weaker ability to control the accuracy-length trade-off and trace a frontier for all the methods tested on DRPO-7B, and more severely, on DeepSeek-R1-Distill-Qwen-7B. We find that to be mainly due to the poor instruction following capabilities of these models. By further investigating the lengths of the reasoning and answer sections separately, we find that when the CoTs are terminated very early, these models compensate by thinking in the answer section, making the trade-off difficult (see App. \ref{appendix:full tables}). A similar finding was observed in~\citep{zhang_making_2025}. In contrast, the GPT-OSS-20B model consistently outputs only a final answer when instructed to do so, even if its thinking is terminated early, due to better instruction tuning.

Notably, we observe a higher baseline accuracy on the MATH-500 benchmark. Investigating further, we find that the benchmark includes many problems with non-numerical answers, where the models often only reason enough to produce an answer by thinking further in the answer section. For example, in a problem about polynomial multiplication, DeepSeek-R1-Distill-Qwen-7B writes a reasoning trace reminding itself how to multiply polynomials and how it has to expand and match terms, but does not arrive at the final answer.

\section{Proof of Theorem \ref{thm:classification-arbitrary-gap}}\label{appendix:proof}
\begin{proof}
Fix $\lambda \in (0,1)$ and $K>0$. Choose
\[
    0 < \varepsilon < \frac{1-\lambda}{2K}.
\]
We construct a finite-horizon stopping problem with one decision point followed,
if the policy continues, by forced termination.

At the first decision point, the process is in one of two states,
$s_g$ or $s_b$, each with probability $1/2$. Both states have zero current
reasoning length and the same current correctness estimate:
\[
    L(s_g)=L(s_b)=0,
    \qquad
    A(s_g)=A(s_b)=\varepsilon.
\]
Thus stopping immediately in either state gives reward $\varepsilon$.

If the policy continues from $s_g$, the process generates one additional token
and reaches a terminal state $t_g$ with correctness $A(t_g)=1$ and length
$L(t_g)=1$. Hence the reward of continuing from $s_g$ is
\[
    A(t_g)-\lambda L(t_g)
    =
    1-\lambda.
\]
Since $\lambda<1$ and $\varepsilon < 1-\lambda$, continuing is strictly better
than stopping in $s_g$.

If the policy continues from $s_b$, the process generates $M$ additional tokens
and reaches a terminal state $t_b$ with correctness $A(t_b)=0$ and length
$L(t_b)=M$. Hence the utility of continuing from $s_b$ is
\[
    A(t_b)-\lambda L(t_b)
    =
    -\lambda M,
\]
which is strictly worse than stopping in $s_b$. Therefore the optimal policy
continues in $s_g$ and stops in $s_b$, giving value
\[
    V_\lambda(\pi^\star)
    =
    \frac12(1-\lambda) + \frac12 \varepsilon.
\]

Now consider any fixed-threshold correctness policy $\pi^{\mathrm{cls}}_\gamma$.
Since
\[
    A(s_g)=A(s_b)=\varepsilon,
\]
the threshold policy must take the same action in both states.

If $\gamma \le \varepsilon$, then the policy stops in both states, and its value is
\[
    V_\lambda(\pi^{\mathrm{cls}}_\gamma)=\varepsilon.
\]
If $\gamma > \varepsilon$, then the policy continues in both states, and its value is
\[
    V_\lambda(\pi^{\mathrm{cls}}_\gamma)
    =
    \frac12(1-\lambda) + \frac12(-\lambda M)
    =
    \frac{1-\lambda-\lambda M}{2}.
\]
Choose $M$ large enough that
\[
    \frac{1-\lambda-\lambda M}{2} \le \varepsilon,
\]
for example any integer
\[
    M \ge \frac{1-\lambda-2\varepsilon}{\lambda}.
\]
Then every threshold policy has value at most $\varepsilon$, and the best
threshold policy obtains exactly
\[
    \sup_\gamma V_\lambda(\pi^{\mathrm{cls}}_\gamma)=\varepsilon.
\]

Therefore,
\[
    \frac{V_\lambda(\pi^\star)}
    {\sup_\gamma V_\lambda(\pi^{\mathrm{cls}}_\gamma)}
    =
    \frac{\frac12(1-\lambda)+\frac12\varepsilon}{\varepsilon}
    =
    \frac{1-\lambda}{2\varepsilon}+\frac12
    >
    K,
\]
by the choice of $\varepsilon$. This proves the claim.
\end{proof}

%%%%%%%%%%%%%%%%%%%%%%%%%%%%%%%%%%%%%%%%%%%%%%%%%%%%%%%%%%%%

\newpage

\section{Full Evaluation Tables}
\label{appendix:full tables}

\begin{table*}[htbp]
    \centering
    \resizebox{\textwidth}{!}{%
        \begin{tabular}{lcccccccc}
        \toprule
        \multicolumn{9}{c}{\textbf{DeepSeek-R1-Distill-Qwen-7B}} \\
        \multirow{2}{*}{Method / Param} & \multicolumn{2}{c}{GSM8K} & \multicolumn{2}{c}{MATH-500} & \multicolumn{2}{c}{AIME} & \multicolumn{2}{c}{\textit{AVERAGE}} \\
        \cmidrule(lr){2-3} \cmidrule(lr){4-5} \cmidrule(lr){6-7} \cmidrule(lr){8-9}
         & Acc & Length (Red.) & Acc & Length (Red.) & Acc & Length (Red.) & Acc & Length (Red.) \\
        \midrule
        \textit{Baseline} \\
        \quad N/A & 88.7 & 1219 (0.0\%) & 77.4 & 3377 (0.0\%) & 48.0 & 9873 (0.0\%) & 71.4 & 4823 (0.0\%) \\
        \addlinespace[0.2ex]
        \textit{Classification Probe} \\
        \quad $\gamma=0.70$ & 85.9 & 632 (48.2\%) & 76.9 & 2247 (33.5\%) & 40.8 & 8374 (15.2\%) & 67.9 & 3751 (32.3\%) \\
        \quad $\gamma=0.75$ & 87.5 & 724 (40.6\%) & 77.6 & 2518 (25.4\%) & 43.6 & 9013 (8.7\%) & 69.6 & 4085 (24.9\%) \\
        \quad $\gamma=0.80$ & 88.2 & 852 (30.1\%) & 77.7 & 2786 (17.5\%) & 46.0 & 9438 (4.4\%) & 70.6 & 4358 (17.3\%) \\
        \quad $\gamma=0.85$ & 88.6 & 1012 (17.0\%) & 77.5 & 3070 (9.1\%) & 47.4 & 9713 (1.6\%) & 71.1 & 4598 (9.2\%) \\
        \quad $\gamma=0.90$ & 88.7 & 1161 (4.7\%) & 77.4 & 3282 (2.8\%) & 47.9 & 9846 (0.3\%) & 71.3 & 4763 (2.6\%) \\
        \addlinespace[0.2ex]
        \textit{Answer Convergence} \\
        \quad $\gamma=0.70$ & 82.6 & 598 (50.9\%) & 76.6 & 2348 (30.5\%) & 43.4 & 8833 (10.5\%) & 67.5 & 3926 (30.6\%) \\
        \quad $\gamma=0.75$ & 85.0 & 648 (46.9\%) & 77.1 & 2445 (27.6\%) & 44.9 & 9033 (8.5\%) & 69.0 & 4042 (27.7\%) \\
        \quad $\gamma=0.80$ & 86.6 & 708 (41.9\%) & 77.3 & 2561 (24.2\%) & 46.1 & 9179 (7.0\%) & 70.0 & 4149 (24.4\%) \\
        \quad $\gamma=0.85$ & 87.7 & 786 (35.5\%) & 77.2 & 2681 (20.6\%) & 46.9 & 9338 (5.4\%) & 70.6 & 4268 (20.5\%) \\
        \quad $\gamma=0.90$ & 88.4 & 898 (26.3\%) & 77.3 & 2829 (16.2\%) & 47.5 & 9500 (3.8\%) & 71.1 & 4409 (15.4\%) \\
        \addlinespace[0.2ex]
        \textit{FlashThink} \\
        \quad $\gamma=0.70$ & 79.3 & 409 (66.5\%) & 71.9 & 1647 (51.2\%) & 37.5 & 7246 (26.6\%) & 62.9 & 3101 (48.1\%) \\
        \quad $\gamma=0.75$ & 82.5 & 433 (64.5\%) & 74.1 & 1763 (47.8\%) & 40.4 & 7658 (22.4\%) & 65.7 & 3285 (44.9\%) \\
        \quad $\gamma=0.80$ & 84.6 & 458 (62.4\%) & 75.6 & 1888 (44.1\%) & 42.6 & 8020 (18.8\%) & 67.6 & 3456 (41.7\%) \\
        \quad $\gamma=0.85$ & 86.5 & 488 (59.9\%) & 76.5 & 2027 (40.0\%) & 44.4 & 8356 (15.4\%) & 69.1 & 3624 (38.4\%) \\
        \quad $\gamma=0.90$ & 87.6 & 528 (56.6\%) & 77.2 & 2198 (34.9\%) & 45.3 & 8602 (12.9\%) & 70.0 & 3776 (34.8\%) \\
        \addlinespace[0.2ex]
        \textit{OS-Pruner} \\
        \quad $\lambda=0$ & 88.3 & 658 (46.0\%) & 81.3 & 2114 (37.4\%) & 47.1 & 9190 (6.9\%) & 72.3 & 3988 (30.1\%) \\
        \quad $\lambda=1.0 \times 10^{-6}$ & 88.3 & 636 (47.8\%) & 81.3 & 2064 (38.9\%) & 46.9 & 9105 (7.8\%) & 72.2 & 3935 (31.5\%) \\
        \quad $\lambda=2.0 \times 10^{-6}$ & 88.3 & 614 (49.6\%) & 81.2 & 2023 (40.1\%) & 46.6 & 9007 (8.8\%) & 72.0 & 3881 (32.8\%) \\
        \quad $\lambda=5.0 \times 10^{-6}$ & 88.3 & 591 (51.5\%) & 81.1 & 1961 (41.9\%) & 46.3 & 8805 (10.8\%) & 71.9 & 3786 (34.7\%) \\
        \quad $\lambda=1.0 \times 10^{-5}$ & 88.3 & 563 (53.8\%) & 81.0 & 1887 (44.1\%) & 45.7 & 8561 (13.3\%) & 71.6 & 3670 (37.1\%) \\
        \quad $\lambda=2.0 \times 10^{-5}$ & 88.2 & 532 (56.3\%) & 80.7 & 1776 (47.4\%) & 44.6 & 7969 (19.3\%) & 71.2 & 3426 (41.0\%) \\
        \quad $\lambda=5.0 \times 10^{-5}$ & 88.0 & 496 (59.3\%) & 80.1 & 1593 (52.8\%) & 41.2 & 6412 (35.1\%) & 69.8 & 2834 (49.0\%) \\
        \quad $\lambda=1.0 \times 10^{-4}$ & 87.8 & 480 (60.6\%) & 78.1 & 1325 (60.8\%) & 33.0 & 3516 (64.4\%) & 66.3 & 1774 (61.9\%) \\
        \addlinespace[0.2ex]
        \bottomrule
        \end{tabular}
    }
    \caption{Full results for DeepSeek-R1-Distill-Qwen-7B with forced answer.}
    \label{tab:full_deepseek_r1_distill_qwen_7b_forced_answer}
\end{table*}

\begin{table*}[htbp]
    \centering
    \resizebox{\textwidth}{!}{%
        \begin{tabular}{lcccccccc}
        \toprule
        \multicolumn{9}{c}{\textbf{GPT-OSS-20B}} \\
        \multirow{2}{*}{Method / Param} & \multicolumn{2}{c}{GSM8K} & \multicolumn{2}{c}{MATH-500} & \multicolumn{2}{c}{AIME} & \multicolumn{2}{c}{\textit{AVERAGE}} \\
        \cmidrule(lr){2-3} \cmidrule(lr){4-5} \cmidrule(lr){6-7} \cmidrule(lr){8-9}
         & Acc & Length (Red.) & Acc & Length (Red.) & Acc & Length (Red.) & Acc & Length (Red.) \\
        \midrule
        \textit{Baseline} \\
        \quad N/A & 93.4 & 278 (0.0\%) & 92.3 & 1126 (0.0\%) & 67.5 & 6001 (0.0\%) & 84.4 & 2468 (0.0\%) \\
        \addlinespace[0.2ex]
        \textit{Classification Probe} \\
        \quad $\gamma=0.70$ & 90.4 & 218 (21.6\%) & 87.3 & 907 (19.4\%) & 59.1 & 5501 (8.3\%) & 78.9 & 2209 (16.4\%) \\
        \quad $\gamma=0.75$ & 91.5 & 227 (18.2\%) & 89.5 & 957 (15.0\%) & 63.5 & 5647 (5.9\%) & 81.5 & 2277 (13.0\%) \\
        \quad $\gamma=0.80$ & 92.4 & 237 (14.5\%) & 91.0 & 1004 (10.9\%) & 65.4 & 5768 (3.9\%) & 82.9 & 2336 (9.8\%) \\
        \quad $\gamma=0.85$ & 92.9 & 249 (10.5\%) & 92.0 & 1055 (6.3\%) & 66.8 & 5880 (2.0\%) & 83.9 & 2395 (6.3\%) \\
        \quad $\gamma=0.90$ & 93.2 & 261 (6.1\%) & 92.3 & 1089 (3.3\%) & 67.4 & 5958 (0.7\%) & 84.3 & 2436 (3.4\%) \\
        \addlinespace[0.2ex]
        \textit{Answer Convergence} \\
        \quad $\gamma=0.70$ & 91.0 & 209 (24.6\%) & 90.5 & 967 (14.1\%) & 56.4 & 5214 (13.1\%) & 79.3 & 2130 (17.3\%) \\
        \quad $\gamma=0.75$ & 91.7 & 216 (22.3\%) & 91.0 & 990 (12.1\%) & 59.5 & 5413 (9.8\%) & 80.7 & 2206 (14.7\%) \\
        \quad $\gamma=0.80$ & 92.3 & 223 (19.5\%) & 91.6 & 1015 (9.9\%) & 62.4 & 5534 (7.8\%) & 82.1 & 2257 (12.4\%) \\
        \quad $\gamma=0.85$ & 92.7 & 231 (16.6\%) & 92.0 & 1041 (7.5\%) & 64.2 & 5634 (6.1\%) & 83.0 & 2302 (10.1\%) \\
        \quad $\gamma=0.90$ & 93.1 & 241 (13.3\%) & 92.2 & 1065 (5.4\%) & 65.8 & 5723 (4.6\%) & 83.7 & 2343 (7.8\%) \\
        \addlinespace[0.2ex]
        \textit{FlashThink} \\
        \quad $\gamma=0.70$ & 91.2 & 189 (31.8\%) & 88.5 & 812 (27.9\%) & 58.4 & 5104 (15.0\%) & 79.4 & 2035 (24.9\%) \\
        \quad $\gamma=0.75$ & 92.1 & 193 (30.6\%) & 89.5 & 831 (26.2\%) & 60.6 & 5201 (13.3\%) & 80.7 & 2075 (23.4\%) \\
        \quad $\gamma=0.80$ & 92.4 & 196 (29.2\%) & 90.2 & 857 (24.0\%) & 61.9 & 5278 (12.0\%) & 81.5 & 2110 (21.7\%) \\
        \quad $\gamma=0.85$ & 92.8 & 201 (27.5\%) & 91.2 & 886 (21.3\%) & 62.8 & 5376 (10.4\%) & 82.3 & 2155 (19.7\%) \\
        \quad $\gamma=0.90$ & 92.9 & 208 (25.2\%) & 91.9 & 915 (18.8\%) & 63.9 & 5426 (9.6\%) & 82.9 & 2183 (17.8\%) \\
        \addlinespace[0.2ex]
        \textit{OS-Pruner} \\
        \quad $\lambda=0$ & 93.2 & 246 (11.3\%) & 92.5 & 1026 (8.9\%) & 66.0 & 5617 (6.4\%) & 83.9 & 2296 (8.9\%) \\
        \quad $\lambda=1.0 \times 10^{-5}$ & 93.2 & 246 (11.5\%) & 92.5 & 1017 (9.7\%) & 65.6 & 5506 (8.2\%) & 83.7 & 2256 (9.8\%) \\
        \quad $\lambda=2.0 \times 10^{-5}$ & 93.1 & 239 (13.9\%) & 92.3 & 983 (12.7\%) & 64.1 & 5265 (12.3\%) & 83.2 & 2162 (13.0\%) \\
        \quad $\lambda=5.0 \times 10^{-5}$ & 93.1 & 241 (13.4\%) & 91.7 & 916 (18.7\%) & 60.7 & 4460 (25.7\%) & 81.8 & 1872 (19.3\%) \\
        \quad $\lambda=1.0 \times 10^{-4}$ & 92.7 & 223 (19.8\%) & 89.4 & 728 (35.4\%) & 50.5 & 2669 (55.5\%) & 77.6 & 1206 (36.9\%) \\
        \quad $\lambda=2.0 \times 10^{-4}$ & 91.7 & 214 (23.0\%) & 83.3 & 526 (53.3\%) & 29.4 & 1067 (82.2\%) & 68.2 & 602 (52.9\%) \\
        \addlinespace[0.2ex]
        \bottomrule
        \end{tabular}
    }
    \caption{Full results for GPT-OSS-20B with forced answer.}
    \label{tab:full_gpt_oss_20b_forced_answer}
\end{table*}

\begin{table*}[htbp]
    \centering
    \resizebox{\textwidth}{!}{%
        \begin{tabular}{lcccccccc}
        \toprule
        \multicolumn{9}{c}{\textbf{DRPO-7B}} \\
        \multirow{2}{*}{Method / Param} & \multicolumn{2}{c}{GSM8K} & \multicolumn{2}{c}{MATH-500} & \multicolumn{2}{c}{AIME} & \multicolumn{2}{c}{\textit{AVERAGE}} \\
        \cmidrule(lr){2-3} \cmidrule(lr){4-5} \cmidrule(lr){6-7} \cmidrule(lr){8-9}
         & Acc & Length (Red.) & Acc & Length (Red.) & Acc & Length (Red.) & Acc & Length (Red.) \\
        \midrule
        \textit{Baseline} \\
        \quad N/A & 91.5 & 398 (0.0\%) & 85.6 & 1096 (0.0\%) & 44.4 & 4359 (0.0\%) & 73.8 & 1951 (0.0\%) \\
        \addlinespace[0.2ex]
        \textit{Classification Probe} \\
        \quad $\gamma=0.70$ & 87.2 & 306 (23.0\%) & 83.4 & 937 (14.5\%) & 42.9 & 4268 (2.1\%) & 71.2 & 1837 (13.2\%) \\
        \quad $\gamma=0.75$ & 89.3 & 322 (19.0\%) & 84.9 & 974 (11.1\%) & 43.7 & 4300 (1.3\%) & 72.7 & 1866 (10.5\%) \\
        \quad $\gamma=0.80$ & 90.5 & 338 (15.0\%) & 85.8 & 1009 (8.0\%) & 44.2 & 4325 (0.8\%) & 73.5 & 1891 (7.9\%) \\
        \quad $\gamma=0.85$ & 91.2 & 357 (10.3\%) & 85.8 & 1044 (4.8\%) & 44.4 & 4354 (0.1\%) & 73.8 & 1918 (5.1\%) \\
        \quad $\gamma=0.90$ & 91.4 & 380 (4.4\%) & 85.6 & 1076 (1.9\%) & 44.4 & 4356 (0.1\%) & 73.8 & 1937 (2.1\%) \\
        \addlinespace[0.2ex]
        \textit{Answer Convergence} \\
        \quad $\gamma=0.70$ & 84.2 & 288 (27.7\%) & 83.1 & 948 (13.5\%) & 40.8 & 4021 (7.8\%) & 69.4 & 1752 (16.3\%) \\
        \quad $\gamma=0.75$ & 86.9 & 301 (24.4\%) & 83.8 & 977 (10.9\%) & 41.7 & 4127 (5.3\%) & 70.8 & 1802 (13.6\%) \\
        \quad $\gamma=0.80$ & 88.9 & 315 (20.7\%) & 84.6 & 1004 (8.5\%) & 42.8 & 4205 (3.5\%) & 72.1 & 1841 (10.9\%) \\
        \quad $\gamma=0.85$ & 90.3 & 331 (16.7\%) & 85.0 & 1034 (5.7\%) & 43.8 & 4261 (2.2\%) & 73.0 & 1876 (8.2\%) \\
        \quad $\gamma=0.90$ & 91.1 & 355 (10.7\%) & 85.3 & 1064 (3.0\%) & 44.2 & 4307 (1.2\%) & 73.5 & 1909 (4.9\%) \\
        \addlinespace[0.2ex]
        \textit{FlashThink} \\
        \quad $\gamma=0.70$ & 88.1 & 288 (27.7\%) & 81.6 & 834 (23.9\%) & 41.8 & 4005 (8.1\%) & 70.5 & 1709 (19.9\%) \\
        \quad $\gamma=0.75$ & 89.3 & 296 (25.7\%) & 83.1 & 865 (21.1\%) & 42.1 & 4052 (7.0\%) & 71.5 & 1738 (17.9\%) \\
        \quad $\gamma=0.80$ & 90.2 & 303 (23.8\%) & 84.0 & 897 (18.2\%) & 42.5 & 4120 (5.5\%) & 72.2 & 1773 (15.8\%) \\
        \quad $\gamma=0.85$ & 90.7 & 310 (22.0\%) & 84.7 & 924 (15.7\%) & 42.9 & 4162 (4.5\%) & 72.8 & 1799 (14.1\%) \\
        \quad $\gamma=0.90$ & 91.0 & 318 (20.1\%) & 84.9 & 958 (12.6\%) & 43.5 & 4221 (3.2\%) & 73.2 & 1832 (12.0\%) \\
        \addlinespace[0.2ex]
        \textit{OS-Pruner} \\
        \quad $\lambda=0$ & 91.2 & 331 (16.8\%) & 87.1 & 956 (12.8\%) & 43.1 & 3870 (11.2\%) & 73.8 & 1719 (13.6\%) \\
        \quad $\lambda=1.0 \times 10^{-5}$ & 91.2 & 330 (16.9\%) & 87.1 & 952 (13.1\%) & 42.9 & 3767 (13.6\%) & 73.7 & 1683 (14.5\%) \\
        \quad $\lambda=2.0 \times 10^{-5}$ & 91.2 & 330 (17.1\%) & 87.0 & 943 (14.0\%) & 42.4 & 3575 (18.0\%) & 73.6 & 1616 (16.4\%) \\
        \quad $\lambda=5.0 \times 10^{-5}$ & 91.2 & 326 (18.0\%) & 86.7 & 891 (18.7\%) & 39.8 & 2819 (35.3\%) & 72.5 & 1346 (24.0\%) \\
        \quad $\lambda=1.0 \times 10^{-4}$ & 91.0 & 322 (19.1\%) & 85.7 & 807 (26.4\%) & 36.9 & 2006 (54.0\%) & 71.2 & 1045 (33.2\%) \\
        \quad $\lambda=2.0 \times 10^{-4}$ & 90.7 & 314 (21.0\%) & 83.0 & 677 (38.3\%) & 30.5 & 1247 (71.4\%) & 68.1 & 746 (43.6\%) \\
        \addlinespace[0.2ex]
        \bottomrule
        \end{tabular}
    }
    \caption{Full results for DRPO-7B with forced answer.}
    \label{tab:full_drpo_7b_forced_answer}
\end{table*}

\begin{table*}[htbp]
    \centering
    \resizebox{\textwidth}{!}{%
        \begin{tabular}{lcccccccc}
        \toprule
        \multicolumn{9}{c}{\textbf{DeepSeek-R1-Distill-Qwen-7B}} \\
        \multirow{2}{*}{Method / Param} & \multicolumn{2}{c}{GSM8K} & \multicolumn{2}{c}{MATH-500} & \multicolumn{2}{c}{AIME} & \multicolumn{2}{c}{\textit{AVERAGE}} \\
        \cmidrule(lr){2-3} \cmidrule(lr){4-5} \cmidrule(lr){6-7} \cmidrule(lr){8-9}
         & Acc & Len (CoT + Ans) (Red.) & Acc & Len (CoT + Ans) (Red.) & Acc & Len (CoT + Ans) (Red.) & Acc & Len (CoT + Ans) (Red.) \\
        \midrule
        \textit{Baseline} \\
        \quad N/A & 90.6 & 1443 (1220+223) (0.0\%) & 91.8 & 3759 (3378+381) (0.0\%) & 46.4 & 10240 (9874+366) (0.0\%) & 76.3 & 5147 (0.0\%) \\
        \addlinespace[0.2ex]
        \textit{Classification Probe} \\
        \quad $\gamma=0.70$ & 85.2 & 841 (628+213) (41.7\%) & 88.5 & 2741 (2233+508) (27.1\%) & 46.1 & 9734 (8355+1379) (4.9\%) & 73.3 & 4439 (24.6\%) \\
        \quad $\gamma=0.75$ & 85.3 & 923 (723+201) (36.0\%) & 89.1 & 2945 (2508+437) (21.7\%) & 46.1 & 9853 (9018+836) (3.8\%) & 73.5 & 4574 (20.5\%) \\
        \quad $\gamma=0.80$ & 86.0 & 1047 (847+200) (27.5\%) & 89.9 & 3169 (2776+393) (15.7\%) & 45.9 & 10018 (9417+600) (2.2\%) & 73.9 & 4745 (15.1\%) \\
        \quad $\gamma=0.85$ & 87.1 & 1217 (1008+209) (15.7\%) & 90.4 & 3435 (3066+369) (8.6\%) & 46.4 & 10133 (9718+415) (1.0\%) & 74.6 & 4928 (8.4\%) \\
        \quad $\gamma=0.90$ & 89.4 & 1374 (1156+218) (4.8\%) & 91.2 & 3646 (3272+374) (3.0\%) & 46.0 & 10237 (9845+392) (0.0\%) & 75.5 & 5086 (2.6\%) \\
        \addlinespace[0.2ex]
        \textit{Answer Convergence} \\
        \quad $\gamma=0.70$ & 84.7 & 795 (596+199) (44.9\%) & 89.5 & 2704 (2341+362) (28.1\%) & 45.8 & 9377 (8838+539) (8.4\%) & 73.4 & 4292 (27.1\%) \\
        \quad $\gamma=0.75$ & 84.9 & 841 (646+194) (41.7\%) & 89.7 & 2782 (2446+336) (26.0\%) & 46.0 & 9436 (9024+412) (7.9\%) & 73.5 & 4353 (25.2\%) \\
        \quad $\gamma=0.80$ & 84.7 & 895 (707+188) (38.0\%) & 89.6 & 2872 (2559+314) (23.6\%) & 46.5 & 9529 (9184+345) (6.9\%) & 73.6 & 4432 (22.8\%) \\
        \quad $\gamma=0.85$ & 85.1 & 969 (785+184) (32.9\%) & 89.8 & 2976 (2676+300) (20.8\%) & 46.1 & 9635 (9351+284) (5.9\%) & 73.7 & 4527 (19.9\%) \\
        \quad $\gamma=0.90$ & 85.2 & 1077 (892+185) (25.4\%) & 90.1 & 3107 (2817+290) (17.3\%) & 46.1 & 9742 (9494+248) (4.9\%) & 73.8 & 4642 (15.9\%) \\
        \addlinespace[0.2ex]
        \textit{FlashThink} \\
        \quad $\gamma=0.70$ & 84.3 & 640 (408+232) (55.7\%) & 88.2 & 2344 (1640+704) (37.6\%) & 46.2 & 9050 (7226+1824) (11.6\%) & 72.9 & 4011 (35.0\%) \\
        \quad $\gamma=0.75$ & 84.9 & 655 (433+222) (54.6\%) & 88.7 & 2377 (1759+619) (36.8\%) & 45.8 & 9081 (7643+1438) (11.3\%) & 73.1 & 4038 (34.2\%) \\
        \quad $\gamma=0.80$ & 84.8 & 669 (459+211) (53.6\%) & 88.8 & 2433 (1886+547) (35.3\%) & 45.7 & 9054 (8004+1050) (11.6\%) & 73.1 & 4052 (33.5\%) \\
        \quad $\gamma=0.85$ & 85.0 & 686 (489+197) (52.5\%) & 88.6 & 2506 (2021+486) (33.3\%) & 45.8 & 9039 (8343+696) (11.7\%) & 73.2 & 4077 (32.5\%) \\
        \quad $\gamma=0.90$ & 84.9 & 722 (527+195) (49.9\%) & 89.0 & 2590 (2193+397) (31.1\%) & 44.8 & 9116 (8578+538) (11.0\%) & 72.9 & 4143 (30.7\%) \\
        \addlinespace[0.2ex]
        \textit{OS-Pruner} \\
        \quad $\lambda=0$ & 84.9 & 834 (658+177) (42.2\%) & 88.5 & 2360 (2127+234) (37.2\%) & 46.1 & 9359 (9225+135) (8.6\%) & 73.2 & 4185 (29.3\%) \\
        \quad $\lambda=1.0 \times 10^{-6}$ & 84.7 & 818 (642+176) (43.3\%) & 88.6 & 2310 (2073+237) (38.5\%) & 46.1 & 9274 (9148+126) (9.4\%) & 73.2 & 4134 (30.4\%) \\
        \quad $\lambda=2.0 \times 10^{-6}$ & 84.7 & 794 (618+176) (45.0\%) & 88.9 & 2268 (2025+243) (39.7\%) & 46.2 & 9200 (9058+141) (10.2\%) & 73.3 & 4087 (31.6\%) \\
        \quad $\lambda=5.0 \times 10^{-6}$ & 84.6 & 771 (593+178) (46.5\%) & 88.5 & 2246 (1980+266) (40.2\%) & 45.9 & 9035 (8849+186) (11.8\%) & 73.0 & 4017 (32.9\%) \\
        \quad $\lambda=1.0 \times 10^{-5}$ & 84.6 & 748 (566+182) (48.2\%) & 88.6 & 2195 (1893+302) (41.6\%) & 46.1 & 8905 (8639+266) (13.0\%) & 73.1 & 3949 (34.3\%) \\
        \quad $\lambda=2.0 \times 10^{-5}$ & 85.0 & 720 (535+185) (50.1\%) & 88.4 & 2155 (1790+364) (42.7\%) & 45.3 & 8839 (8109+730) (13.7\%) & 72.9 & 3905 (35.5\%) \\
        \quad $\lambda=5.0 \times 10^{-5}$ & 85.0 & 693 (500+194) (51.9\%) & 88.3 & 2119 (1617+502) (43.6\%) & 45.2 & 8765 (6553+2211) (14.4\%) & 72.8 & 3859 (36.7\%) \\
        \quad $\lambda=1.0 \times 10^{-4}$ & 85.0 & 686 (482+204) (52.5\%) & 88.3 & 2160 (1347+813) (42.5\%) & 44.2 & 8822 (3601+5220) (13.9\%) & 72.5 & 3889 (36.3\%) \\
        \addlinespace[0.2ex]
        \bottomrule
        \end{tabular}
    }
    \caption{Full results for DeepSeek-R1-Distill-Qwen-7B with free thinking.}
    \label{tab:full_deepseek_r1_distill_qwen_7b_free_thinking}
\end{table*}

\begin{table*}[htbp]
    \centering
    \resizebox{\textwidth}{!}{%
        \begin{tabular}{lcccccccc}
        \toprule
        \multicolumn{9}{c}{\textbf{GPT-OSS-20B}} \\
        \multirow{2}{*}{Method / Param} & \multicolumn{2}{c}{GSM8K} & \multicolumn{2}{c}{MATH-500} & \multicolumn{2}{c}{AIME} & \multicolumn{2}{c}{\textit{AVERAGE}} \\
        \cmidrule(lr){2-3} \cmidrule(lr){4-5} \cmidrule(lr){6-7} \cmidrule(lr){8-9}
         & Acc & Len (CoT + Ans) (Red.) & Acc & Len (CoT + Ans) (Red.) & Acc & Len (CoT + Ans) (Red.) & Acc & Len (CoT + Ans) (Red.) \\
        \midrule
        \textit{Baseline} \\
        \quad N/A & 93.5 & 284 (279+5) (0.0\%) & 95.1 & 1134 (1127+7) (0.0\%) & 65.4 & 6008 (6002+6) (0.0\%) & 84.7 & 2475 (0.0\%) \\
        \addlinespace[0.2ex]
        \textit{Classification Probe} \\
        \quad $\gamma=0.70$ & 90.7 & 224 (218+5) (21.2\%) & 89.8 & 905 (891+14) (20.2\%) & 56.9 & 5442 (5377+65) (9.4\%) & 79.1 & 2190 (16.9\%) \\
        \quad $\gamma=0.75$ & 91.9 & 233 (228+5) (17.8\%) & 92.0 & 961 (950+10) (15.3\%) & 62.3 & 5644 (5610+34) (6.1\%) & 82.0 & 2279 (13.1\%) \\
        \quad $\gamma=0.80$ & 92.7 & 243 (238+5) (14.3\%) & 93.8 & 1012 (1003+9) (10.8\%) & 63.7 & 5776 (5754+22) (3.9\%) & 83.4 & 2344 (9.6\%) \\
        \quad $\gamma=0.85$ & 93.1 & 255 (250+5) (10.1\%) & 94.5 & 1058 (1050+8) (6.7\%) & 65.0 & 5892 (5877+15) (1.9\%) & 84.2 & 2402 (6.2\%) \\
        \quad $\gamma=0.90$ & 93.4 & 266 (261+5) (6.2\%) & 94.9 & 1097 (1091+7) (3.2\%) & 65.6 & 5967 (5957+10) (0.7\%) & 84.6 & 2444 (3.4\%) \\
        \addlinespace[0.2ex]
        \textit{Answer Convergence} \\
        \quad $\gamma=0.70$ & 90.9 & 214 (209+5) (24.6\%) & 93.3 & 972 (958+14) (14.3\%) & 56.4 & 5182 (5104+78) (13.7\%) & 80.2 & 2123 (17.5\%) \\
        \quad $\gamma=0.75$ & 91.7 & 221 (215+5) (22.3\%) & 94.0 & 991 (980+11) (12.6\%) & 59.0 & 5385 (5327+58) (10.4\%) & 81.5 & 2199 (15.1\%) \\
        \quad $\gamma=0.80$ & 92.3 & 228 (223+5) (19.6\%) & 94.4 & 1018 (1008+10) (10.3\%) & 61.3 & 5553 (5505+47) (7.6\%) & 82.7 & 2266 (12.5\%) \\
        \quad $\gamma=0.85$ & 92.8 & 236 (231+5) (16.8\%) & 94.7 & 1043 (1034+9) (8.0\%) & 63.1 & 5643 (5615+27) (6.1\%) & 83.6 & 2307 (10.3\%) \\
        \quad $\gamma=0.90$ & 93.1 & 245 (240+5) (13.7\%) & 95.0 & 1071 (1061+10) (5.6\%) & 64.8 & 5743 (5719+25) (4.4\%) & 84.3 & 2353 (7.9\%) \\
        \addlinespace[0.2ex]
        \textit{FlashThink} \\
        \quad $\gamma=0.70$ & 91.6 & 195 (190+5) (31.3\%) & 91.4 & 823 (810+13) (27.4\%) & 58.0 & 5145 (5103+42) (14.4\%) & 80.3 & 2054 (24.4\%) \\
        \quad $\gamma=0.75$ & 92.4 & 199 (194+5) (29.9\%) & 92.5 & 843 (829+14) (25.7\%) & 59.3 & 5201 (5176+25) (13.4\%) & 81.4 & 2081 (23.0\%) \\
        \quad $\gamma=0.80$ & 92.8 & 202 (197+5) (28.7\%) & 93.1 & 867 (856+11) (23.6\%) & 61.4 & 5277 (5267+9) (12.2\%) & 82.4 & 2115 (21.5\%) \\
        \quad $\gamma=0.85$ & 93.1 & 207 (202+5) (27.0\%) & 94.2 & 890 (880+10) (21.5\%) & 61.8 & 5388 (5377+11) (10.3\%) & 83.0 & 2162 (19.6\%) \\
        \quad $\gamma=0.90$ & 93.2 & 213 (208+5) (24.9\%) & 94.6 & 920 (912+8) (18.9\%) & 62.9 & 5437 (5422+15) (9.5\%) & 83.6 & 2190 (17.7\%) \\
        \addlinespace[0.2ex]
        \textit{OS-Pruner} \\
        \quad $\lambda=0$ & 93.4 & 254 (248+5) (10.7\%) & 95.1 & 1045 (1038+7) (7.9\%) & 64.9 & 5648 (5642+6) (6.0\%) & 84.5 & 2316 (8.2\%) \\
        \quad $\lambda=1.0 \times 10^{-5}$ & 93.4 & 254 (249+5) (10.5\%) & 95.2 & 1025 (1017+8) (9.6\%) & 65.1 & 5552 (5545+6) (7.6\%) & 84.6 & 2277 (9.2\%) \\
        \quad $\lambda=2.0 \times 10^{-5}$ & 93.4 & 247 (242+5) (13.0\%) & 94.9 & 995 (987+8) (12.3\%) & 63.6 & 5334 (5323+11) (11.2\%) & 84.0 & 2192 (12.2\%) \\
        \quad $\lambda=5.0 \times 10^{-5}$ & 93.3 & 249 (243+5) (12.4\%) & 94.5 & 916 (908+9) (19.2\%) & 60.4 & 4401 (4391+10) (26.7\%) & 82.7 & 1855 (19.5\%) \\
        \quad $\lambda=1.0 \times 10^{-4}$ & 93.1 & 233 (228+5) (18.0\%) & 92.5 & 744 (733+12) (34.4\%) & 51.0 & 2885 (2759+126) (52.0\%) & 78.9 & 1287 (34.8\%) \\
        \quad $\lambda=2.0 \times 10^{-4}$ & 92.6 & 226 (221+5) (20.4\%) & 86.6 & 553 (534+18) (51.3\%) & 31.9 & 1329 (1148+182) (77.9\%) & 70.3 & 703 (49.8\%) \\
        \addlinespace[0.2ex]
        \bottomrule
        \end{tabular}
    }
    \caption{Full results for GPT-OSS-20B with free thinking.}
    \label{tab:full_gpt_oss_20b_free_thinking}
\end{table*}

\begin{table*}[htbp]
    \centering
    \resizebox{\textwidth}{!}{%
        \begin{tabular}{lcccccccc}
        \toprule
        \multicolumn{9}{c}{\textbf{DRPO-7B}} \\
        \multirow{2}{*}{Method / Param} & \multicolumn{2}{c}{GSM8K} & \multicolumn{2}{c}{MATH-500} & \multicolumn{2}{c}{AIME} & \multicolumn{2}{c}{\textit{AVERAGE}} \\
        \cmidrule(lr){2-3} \cmidrule(lr){4-5} \cmidrule(lr){6-7} \cmidrule(lr){8-9}
         & Acc & Len (CoT + Ans) (Red.) & Acc & Len (CoT + Ans) (Red.) & Acc & Len (CoT + Ans) (Red.) & Acc & Len (CoT + Ans) (Red.) \\
        \midrule
        \textit{Baseline} \\
        \quad N/A & 91.5 & 430 (399+31) (0.0\%) & 93.2 & 1245 (1097+148) (0.0\%) & 44.3 & 4624 (4360+264) (0.0\%) & 76.3 & 2100 (0.0\%) \\
        \addlinespace[0.2ex]
        \textit{Classification Probe} \\
        \quad $\gamma=0.70$ & 90.6 & 362 (306+56) (15.6\%) & 92.5 & 1086 (934+151) (12.8\%) & 44.1 & 4532 (4255+277) (2.0\%) & 75.7 & 1993 (10.1\%) \\
        \quad $\gamma=0.75$ & 91.0 & 366 (323+43) (14.7\%) & 92.8 & 1108 (974+134) (11.0\%) & 44.3 & 4569 (4301+268) (1.2\%) & 76.0 & 2014 (9.0\%) \\
        \quad $\gamma=0.80$ & 91.2 & 375 (339+36) (12.7\%) & 93.0 & 1141 (1010+131) (8.4\%) & 44.3 & 4594 (4319+275) (0.6\%) & 76.2 & 2037 (7.3\%) \\
        \quad $\gamma=0.85$ & 91.4 & 390 (357+33) (9.2\%) & 93.2 & 1177 (1044+134) (5.4\%) & 44.4 & 4616 (4354+262) (0.2\%) & 76.3 & 2061 (4.9\%) \\
        \quad $\gamma=0.90$ & 91.5 & 411 (380+31) (4.3\%) & 93.2 & 1216 (1075+142) (2.3\%) & 44.4 & 4622 (4357+265) (0.0\%) & 76.4 & 2083 (2.2\%) \\
        \addlinespace[0.2ex]
        \textit{Answer Convergence} \\
        \quad $\gamma=0.70$ & 90.2 & 358 (288+70) (16.7\%) & 92.7 & 1100 (948+152) (11.7\%) & 44.0 & 4306 (3997+309) (6.9\%) & 75.6 & 1921 (11.8\%) \\
        \quad $\gamma=0.75$ & 90.6 & 358 (301+57) (16.6\%) & 92.8 & 1116 (977+139) (10.4\%) & 44.3 & 4358 (4124+234) (5.8\%) & 75.9 & 1944 (10.9\%) \\
        \quad $\gamma=0.80$ & 91.1 & 362 (316+46) (15.7\%) & 92.8 & 1136 (1004+132) (8.7\%) & 44.1 & 4421 (4206+215) (4.4\%) & 76.0 & 1973 (9.6\%) \\
        \quad $\gamma=0.85$ & 91.2 & 370 (332+38) (13.9\%) & 92.9 & 1163 (1034+128) (6.6\%) & 44.1 & 4469 (4263+206) (3.3\%) & 76.1 & 2001 (8.0\%) \\
        \quad $\gamma=0.90$ & 91.4 & 388 (355+33) (9.7\%) & 93.1 & 1198 (1064+134) (3.8\%) & 44.3 & 4521 (4305+217) (2.2\%) & 76.3 & 2036 (5.2\%) \\
        \addlinespace[0.2ex]
        \textit{FlashThink} \\
        \quad $\gamma=0.70$ & 90.8 & 349 (288+62) (18.6\%) & 92.3 & 1058 (834+224) (15.1\%) & 43.9 & 4252 (3996+256) (8.1\%) & 75.7 & 1886 (13.9\%) \\
        \quad $\gamma=0.75$ & 91.0 & 349 (296+53) (18.8\%) & 92.6 & 1060 (864+197) (14.8\%) & 43.9 & 4295 (4069+226) (7.1\%) & 75.8 & 1901 (13.6\%) \\
        \quad $\gamma=0.80$ & 91.0 & 350 (304+46) (18.6\%) & 92.7 & 1069 (895+173) (14.2\%) & 43.6 & 4308 (4120+187) (6.8\%) & 75.8 & 1909 (13.2\%) \\
        \quad $\gamma=0.85$ & 91.3 & 350 (311+39) (18.6\%) & 92.7 & 1076 (924+153) (13.5\%) & 44.0 & 4352 (4163+189) (5.9\%) & 76.0 & 1926 (12.7\%) \\
        \quad $\gamma=0.90$ & 91.3 & 354 (318+35) (17.6\%) & 92.5 & 1097 (955+142) (11.9\%) & 44.2 & 4406 (4219+187) (4.7\%) & 76.0 & 1952 (11.4\%) \\
        \addlinespace[0.2ex]
        \textit{OS-Pruner} \\
        \quad $\lambda=0$ & 91.3 & 361 (332+29) (15.9\%) & 92.8 & 1050 (958+92) (15.7\%) & 43.6 & 4127 (3901+226) (10.8\%) & 75.9 & 1846 (14.1\%) \\
        \quad $\lambda=1.0 \times 10^{-5}$ & 91.3 & 361 (332+30) (15.9\%) & 92.8 & 1045 (954+91) (16.1\%) & 44.2 & 4071 (3803+268) (12.0\%) & 76.1 & 1826 (14.6\%) \\
        \quad $\lambda=2.0 \times 10^{-5}$ & 91.3 & 360 (331+29) (16.3\%) & 92.8 & 1041 (945+96) (16.4\%) & 43.6 & 3999 (3636+363) (13.5\%) & 75.9 & 1800 (15.4\%) \\
        \quad $\lambda=5.0 \times 10^{-5}$ & 91.3 & 357 (327+30) (16.8\%) & 92.6 & 1027 (898+129) (17.5\%) & 42.5 & 3806 (2816+991) (17.7\%) & 75.5 & 1730 (17.3\%) \\
        \quad $\lambda=1.0 \times 10^{-4}$ & 91.3 & 355 (323+32) (17.2\%) & 92.3 & 1032 (810+222) (17.2\%) & 42.6 & 3758 (2041+1717) (18.7\%) & 75.4 & 1715 (17.7\%) \\
        \quad $\lambda=2.0 \times 10^{-4}$ & 91.3 & 354 (315+38) (17.6\%) & 92.3 & 1013 (682+331) (18.6\%) & 42.1 & 3353 (1295+2058) (27.5\%) & 75.2 & 1573 (21.2\%) \\
        \addlinespace[0.2ex]
        \bottomrule
        \end{tabular}
    }
    \caption{Full results for DRPO-7B with free thinking.}
    \label{tab:full_drpo_7b_free_thinking}
\end{table*}

\newpage

\end{document}